\documentclass[letterpaper]{IEEEtran}
\usepackage{latexsym,,amssymb,amsmath,graphicx,epsf,cite,bbm,float}
\usepackage{ifpdf}
\usepackage{epstopdf}

\usepackage{algorithm,algorithmic}
\usepackage{amsmath,amssymb,bm}
\usepackage{amsfonts,dsfont,color,bbm,subcaption}

\usepackage[linguistics]{forest}

\def\ninept{\def\baselinestretch{1}}
\ninept

\newcommand{\be}{\begin{equation}}
\newcommand{\ee}{\end{equation}}
\newcommand{\bea}{\begin{eqnarray}}
\newcommand{\eea}{\end{eqnarray}}

\newcommand{\MB}{\left[\begin{array}}
\newcommand{\ME}{\end{array}\right]}

\newcommand{\ei}{\end{itemize}}
\newcommand{\bi}{\begin{itemize}}

\newcommand{\X}{\mathcal{X}}

\newcommand{\Z}{\mathcal{Z}}

\newcommand{\A}{\mathcal{A}}
\newcommand{\B}{\mathcal{B}}

\usepackage{amsmath,amsthm}
\usepackage{hyperref}

\DeclareMathOperator*{\argmin}{arg\,min}

\newtheorem{theorem}{Theorem}

\newtheorem{lemma}[]{Lemma}

\newtheorem{definition}[]{Definition}

\newtheorem{assumption}[]{Assumption}

\newcommand{\rulesep}{\unskip\ \vrule\ }
\captionsetup[subfigure]{labelformat=empty}
\captionsetup[figure]{labelformat=empty}

\AtBeginDocument{
\addtolength{\abovedisplayskip}{-0.4ex}
\addtolength{\abovedisplayshortskip}{-0.4ex}
\addtolength{\belowdisplayskip}{-0.4ex}
\addtolength{\belowdisplayshortskip}{-0.4ex}
\addtolength{\belowcaptionskip}{-3.6ex}
}

\begin{document}

\title{Optimally Efficient Sequential Calibration of Binary Classifiers to Minimize Classification Error} 
\author{
	\IEEEauthorblockN{Kaan Gokcesu}, \IEEEauthorblockN{Hakan Gokcesu}
}
\maketitle

\begin{abstract}
	In this work, we aim to calibrate the score outputs of an estimator for the binary classification problem by finding an 'optimal' mapping to class probabilities, where the 'optimal' mapping is in the sense that minimizes the classification error (or equivalently, maximizes the accuracy). We show that for the given target variables and the score outputs of an estimator, an 'optimal' soft mapping, which monotonically maps the score values to probabilities, is a hard mapping that maps the score values to $0$ and $1$. We show that for class weighted (where the accuracy for one class is more important) and sample weighted (where the samples' accurate classifications are not equally important) errors, or even general linear losses; this hard mapping characteristic is preserved. We propose a sequential recursive merger approach, which produces an 'optimal' hard mapping (for the observed samples so far) sequentially with each incoming new sample. Our approach has a logarithmic in sample size time complexity, which is optimally efficient.
\end{abstract}

\section{Introduction}
	\subsection{Preliminaries}
	In most prominent detection, estimation, prediction and learning problems \cite{poor_book, cesa_book}, intelligent agents often make decisions under considerable uncertainty (randomness, noise, incomplete data), where they combine features to determine the actions that maximize some utility \cite{russel2010}. The applications are numerous in many fields including decision theory \cite{tnnls4}, control theory \cite{tnnls3}, game theory \cite{tnnls1,chang}, optimization \cite{zinkevich,hazan}, distribution estimation \cite{gokcesu2018density,willems,gokcesu2018anomaly,coding2}, anomaly detection \cite{gokcesu2019outlier,gokcesu2016nested}, signal processing \cite{gokcesu2018semg,ozkan}, prediction \cite{singer,gokcesu2016prediction} and bandits \cite{cesa-bianchi,gokcesu2018bandit}. The outputs of these learning models are designed to discriminate the data patterns and provide accurate probabilities for practical usefulness. Most learning methods produce classifiers that output scores which can be used to rank the samples from the most to the least probable member of a class. However, in many applications, a ranking of probabilities is not enough and accurate estimates are needed. To this end, developing a calibration method for post-processing the output of commonly used classifiers to generate accurate probabilities has become important. A classifier is well-calibrated if the predicted probabilities coincide with the empirical ones. Deviations from perfect calibration are common in practice and vary depending on the classification models \cite{naeini2015obtaining}. Producing well-calibrated probabilities are critical in many areas of science (e.g., which experiment to conduct), medicine (e.g., which therapy to use), business (e.g., which investment to make) etc. In learning problems, obtaining well-calibrated classifiers is crucial not only for decision making, but also for combining \cite{bella2013effect} or comparing \cite{zhang2004naive,jiang2005learning, hashemi2010application} different classifiers. Research on learning well-calibrated models is not as extensive as learning models with high discrimination.
	
	\subsection{Calibration of Classification Models}
	Probability estimates are important when the classification outputs are not used in isolation but are combined with other sources of information for decision-making, such as the outputs of another classifier or example-dependent misclassification costs \cite{zadrozny2001obtaining}. For example, in handwritten character recognition problem, the outputs from the classifier are used as input to a high-level system which incorporates domain information, such as a language model.
	
	\subsubsection{Well-Calibrated Classification} There are two main approaches to obtaining well-calibrated classification models: 
	\begin{itemize}
		\item The first approach is to build a classification model that is intrinsically well-calibrated.  
		\item The second approach is to rely on the existing discriminative classification models and calibrating their output using post-processing. 
	\end{itemize}
	The first approach will restrict the designer of the learning model by requiring major changes in the objective function (e.g, using a different loss function) and could potentially increase the complexity and computational cost of the associated optimization to	learn the model. Whilst the second approach is general, flexible, and it frees the designer from modifying the learning procedure and the associated optimization method \cite{naeini2015obtaining}.	However, this approach has the potential to decrease discrimination while increasing calibration, if care is not taken.
	
	\subsubsection{Post-Processing Calibration} In general, post-processing calibration methods have two main applications. 
	\begin{itemize}
		\item First, they can be used to convert the outputs of discriminative classification methods with no apparent probabilistic interpretation to posterior class probabilities.
		\item Second, calibration methods can be applied to improve the calibration of a miscalibrated probabilistic model.
	\end{itemize}
	An example to the first application is an SVM model that learns a discriminative model that does not have a direct probabilistic interpretation. In \cite{platt1999probabilistic}, they show that the use of calibration maps SVM outputs to well-calibrated probabilities. An example to the second application is in Naive Bayes (NB) (which is a probabilistic model but its class posteriors are often miscalibrated due to unrealistic independence assumptions \cite{niculescu2005predicting}), where the aim is to improve the calibration without reducing the discrimination. This approach can also work well on calibrating models that are less miscalibrated than NB.
	
	Hence, the collective goal is to map some outputs (estimations) of a learner to suitable probabilities. If the learning method does not overfit the training data, we can use the same data to learn this function. Otherwise, we need to break	the training data into two sets: one for learning the classifier and the other for learning the mapping function.
	
	\subsection{Literature Review}
	In literature, there are various methods to map the outputs of an estimator to probabilities that are well-calibrated, where all of them require a regularization to avoid learning a mapping that does not generalize well to new data (over-fitting). 
	\subsubsection{Parametric Approaches}
	One possible regularization is to impose a parametric shape and use the available data to learn the parameters.	The approach by Platt \cite{platt1999probabilistic} is one such method, which uses a sigmoid to map the outputs into calibrated probabilities. The parameters of the sigmoid are learned in a maximum-likelihood framework (such that the negative log-likelihood is minimized) using a model-trust minimization algorithm \cite{gill2019practical}. The method was originally developed to transform the output of an SVM model into calibrated probabilities, since the relationship between SVM scores and the empirical probabilities appears to be sigmoidal for many datasets. Platt has shown empirically that this faster method yields probability estimates that are at least as accurate as ones obtained by training another SVM specifically for producing the probability estimates. It has also been used to calibrate other type of classifiers \cite{niculescu2005predicting}. One such application is to Naive Bayes, which was proposed by Bennett \cite{bennett2000assessing}. However, the sigmoid shape does not appear to fit Naive Bayes scores as well (in comparison to SVM) for certain datasets \cite{zadrozny2002transforming}. While this approach prevents over-fitting and is computationally efficient; it is restrictive \cite{jiang2012calibrating} (since the scores outputs are directly used whether they are noisy or erroneous).
	
	\subsubsection{Histogram Binning}
	To address the issues of parametric approaches; the less restrictive, non-parametric calibration methods such as the equal frequency histogram binning model (also known as quantile binning or just binning) \cite{zadrozny2001learning, zadrozny2001obtaining} have become popular. In binning, scores are sorted and partitioned into bins of equal size. For each new score in a specific bin, the calibrated probability is estimated as the fraction of the samples of a particular class. Instead of completely trusting the score output values, the main idea is to use outputs that are near each other to compute their probabilities, which is where the regularization comes from. While it has less restrictions and is computationally efficient; the bin boundaries remain fixed over all predictions and there is uncertainty in the optimal number of the bins \cite{zadrozny2002transforming}.
For small or unbalanced datasets, the optimal number of bins may be hard to determine. Moreover, since the size and the position of the bins are chosen arbitrarily, we may fail to produce accurate calibrated probabilities. To this end, there are many extensions/refinements, one of which is ACP \cite{jiang2012calibrating} that derives a confidence interval around each prediction to build the bins. BBQ \cite{naeini2015obtaining} is another one, which addresses the drawbacks by considering multiple binning models and their combination with a Bayesian scoring function \cite{heckerman1995learning}. However, the positions and boundaries of the bins are still selected with equal histogram binning. ABB \cite{naeini2015binary} addresses this by considering Bayesian averaging over all possible binning models induced by the samples, whose main drawback is its complexity (quadratic in the sample size). However, all these approaches do not take advantage of the fact that the input estimator has good discrimination (otherwise, the outputs are better used as features for another classifier).
	
	\subsubsection{Isotonic Regression}
	To address the issues of both parametric and binning approaches, the most commonly used non-parametric classifier calibration method in machine learning has become the isotonic regression \cite{robertson1988order} based calibration (IsoRegC) models \cite{zadrozny2002transforming}, which is an intermediary approach between sigmoid fitting and binning. Isotonic regression is a non-parametric regression, where it assumes the mapping is isotonic (monotonic) based on the ranking imposed by the base estimator from the uncalibrated outputs to the calibrated probabilities, which is where the regularization comes from. If we assume that the estimator ranks samples correctly; the mapping from scores into probabilities is non-decreasing, which can be learned with isotonic regression. A commonly used algorithm in isotonic regression is pair-adjacent violators (PAV) \cite{ayer1955empirical}, whose computational complexity is linear in the number of samples \cite{brunk1972statistical}. To calculate the calibrated probabilities, such algorithms will use more samples in parts of the score space where the estimator ranks them incorrectly, and less samples in parts of the space where the estimator ranks them correctly. We can view an IsoRegC model based on PAV as a binning algorithm since the number, position and size of the bins are chosen according to how well the classifier ranks the samples, i.e., the position of the boundaries are selected by fitting the best monotone approximation to the samples according to the ordering imposed by the estimator \cite{zadrozny2002transforming}.
	Approaches that address the issues of binning by incorporating isotonic regression have also become popular such as ENIR \cite{naeini2016binary}, which utilizes the path algorithm modified pool adjacent violators algorithm (mPAVA) that can find the solution path to a near isotonic regression problem in linearithmic time \cite{tibshirani2011nearly} and combines the predictions made by these models. There is also a variation of the isotonic-regression-based calibration method for predicting accurate probabilities with a ranking loss \cite{menon2012predicting}. Another extension combines the outputs from multiple binary classifiers to obtain calibrated probabilities \cite{zhong2013accurate}.

\subsection{Contributions}

Calibration mappings need to be well-regularized (no over-fitting), less restrictive (no under-fitting), easy to optimize (better modeling) and easy to update (practical use). While all the existing approaches are somewhat well-regularized, each of them has a distinct disadvantage. The parametric approaches are too restrictive; the histogram binning approaches are hard to optimize (number, location, size of the bins); the isotonic regression approaches are hard to update (with new samples). While these methods are ideologically different, we show that an optimal isotonic mapping for the minimization of the classification error is a thresholding function, which coincides with a parametric sigmoid mapping with suitably selected parameters; and histogram binning with suitably selected bin numbers, locations and sizes. We also extend our results to class weighted, sample weighted error, and even general linear losses. We propose a new approach to find the threshold by recursive merger of adjacent sets with its sequential implementation, which can update the threshold with each new incoming sample. Our approach is optimally efficient (logarithmic in sample size for each new sample), which makes it suitable to implement in many applications.
	
\section{Optimal Monotone Transform For Binary Classification is a Thresholding Function}\label{sec:binary}
In this section, we show that, for the problem of minimizing the classification error, an optimal monotone transform on the score values produced by an estimator is a thresholding function, hence, a hard mapping. We start with the formal problem definition.
\subsection{Problem Definition}
	We have $N$ number of samples indexed by $n\in\{1,\ldots,N\}$. For every $n$, 
	\begin{enumerate}
		\item We have the target value $y_n$, which is the binary class of the $n^{th}$ sample, i.e., 
		\begin{align}
		y_n\in\{0,1\}.\label{eq:y}
		\end{align}
		\item We have the output $x_n$ of an estimator, which is the score of the $n^{th}$ sample, i.e.,
		\begin{align}
		x_n\in\overline\Re.\label{eq:x}
		\end{align} 
		\item We map the score outputs $x_n$ of the estimator to probabilities $p_n$ with a mapping function $C(\cdot)$, i.e., 
		\begin{align}
		p_n\triangleq C(x_n)\in[0,1].\label{eq:p}
		\end{align}
		\item We choose the mapping $C(\cdot)$ such that it is a monotone transform (monotonically nondecreasing), i.e.,
		\begin{align}
		C(x_n)\geq C(x_{n'}) \text{ if } x_n> x_{n'}.\label{eq:C}
		\end{align} 
	\end{enumerate}
	For the given setting above, we have the following problem definition for the minimization of the binary classification error.
	\begin{definition}\label{def:problem}
		For a given set of score outputs $\{x_n\}_{n=1}^N$ and target values $\{y_n\}_{n=1}^N$, the minimization of the binary classification error problem is given by
		\begin{align*}
		\argmin_{C(\cdot)\in\Omega}\sum_{n=1}^{N}y_n(1-C(x_n))+(1-y_n)C(x_n),
		\end{align*}
		where $\Omega$ is the class of all univariate monotonically nondecreasing functions that map from $\overline\Re$ to the interval $[0,1]$.
	\end{definition}
	The problem in \autoref*{def:problem} aims to minimize the classification error. Without loss of generality and problem definition, we make the following assumptions.
	\begin{assumption}\label{ass:monotone}
	Let $\{x_n\}_{n=1}^N$ be in an ascending order, i.e.,
	\begin{align*}
	x_{n-1}\leq x_n,
	\end{align*}
	since if they are not, we can simply order the scores $x_n$ and acquire the corresponding $y_n$ target variables.
	\end{assumption}
	\begin{assumption}\label{ass:dummy}
		We have the following two sample pairs
		\begin{align*}
		y_1=0,\text{     } x_1=-\infty && y_N=1, \text{     }x_N=\infty,
		\end{align*}
		since, otherwise, we can arbitrarily add these dummy samples, which does not change the result of the original problem.
	\end{assumption}
	
	Next, we show why the optimal monotone transform $C(\cdot)$ on the scores $x_n$ is a thresholding function.
	
	\subsection{Optimality of Thresholding for Classification Error}\label{sec:optimal}
	Let us have a minimizer $C^*(\cdot)$ for \autoref*{def:problem} given by
	\begin{align}
		C^*(x_n)=p^*_n, &&n\in\{1,\ldots,N\},\label{eq:prob}
	\end{align} 
	where $p^*_n$ be the corresponding probabilities.
	\begin{lemma}\label{lem:sample}
	If $C^*(\cdot)$ is a minimizer for \autoref*{def:problem}, then
	\begin{align*}
	C^*(x_n)=p^*_n=
	\begin{cases}
	p^*_{n+1}, & y_n=1\\
	p^*_{n-1}, & y_n=0
	\end{cases},
	\end{align*}
	for $n=\{2,\ldots,N-1\}$, where $p^*_1=0$ and $p^*_N=1$, which are the dummy samples from \autoref*{ass:dummy}.
	\end{lemma}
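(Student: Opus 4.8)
The plan is to exploit the fact that the objective in \autoref*{def:problem} is \emph{separable and linear} in the values $p_n = C(x_n)$, so that each coordinate can be perturbed in isolation subject only to the monotonicity constraints inherited from \autoref*{ass:monotone} and the defining property \eqref{eq:C}. Rewriting the objective as $\sum_{n=1}^{N}\left[ y_n(1-p_n) + (1-y_n)p_n \right]$, the contribution of sample $n$ is $1-p_n$ when $y_n=1$ and $p_n$ when $y_n=0$; hence the total error strictly decreases whenever we increase $p_n$ for a positive sample or decrease $p_n$ for a negative sample. Because $C^*$ is monotone and the scores are ordered, the admissible values obey $p_{n-1}^* \le p_n^* \le p_{n+1}^*$, which pins down exactly how far each coordinate may be moved before a neighboring constraint is met.

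First I would treat $y_n = 1$ by contradiction. Assuming $p_n^* < p_{n+1}^*$, I would define a new mapping that agrees with $C^*$ everywhere except that it raises the value at $x_n$ up to $p_{n+1}^*$. This candidate is still monotone, since $p_{n-1}^* \le p_n^* < p_{n+1}^*$ keeps the full chain of inequalities intact and no non-adjacent constraint is touched, and it still maps into $[0,1]$; yet it lowers the objective by exactly $p_{n+1}^* - p_n^* > 0$, contradicting the optimality of $C^*$ and forcing $p_n^* = p_{n+1}^*$. The case $y_n = 0$ is the mirror image: assuming $p_n^* > p_{n-1}^*$, I would lower the value at $x_n$ down to $p_{n-1}^*$, which again preserves feasibility and strictly improves the objective, so $p_n^* = p_{n-1}^*$. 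The boundary conditions follow from the same perturbation applied to the dummy endpoints of \autoref*{ass:dummy}: since $y_1=0$ its value may be driven down to the box bound with no lower neighbor to obstruct it, giving $p_1^*=0$, and since $y_N=1$ its value may be pushed up to $1$, giving $p_N^*=1$.

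The one point requiring care is the possibility of tied scores, $x_n = x_{n+1}$ (or $x_{n-1}=x_n$): there one cannot perturb a single sample's value independently, because \eqref{eq:C} already forces $C^*(x_n) = C^*(x_{n+1})$, so the asserted equality holds automatically and no perturbation is needed. I expect this tie bookkeeping, together with checking that the single-coordinate move never disturbs the term of the neighbor whose value defines the perturbation target, to be the only delicate part; the underlying argument is simply that a linear objective on the monotone feasible polytope is minimized by pushing each coordinate to the constraint boundary dictated by its label, which is precisely the two-case statement of the lemma.
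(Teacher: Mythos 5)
Your proof is correct and is essentially the paper's own argument, which the authors compress into one line (``increasing $p^*_n$ decreases the objective if $y_n=1$, opposite for $y_n=0$''): a single-coordinate perturbation toward the advantageous neighbor, justified by linearity of the objective in $p_n$ and the monotonicity constraints $p_{n-1}^*\le p_n^*\le p_{n+1}^*$. The only caveat is that your tie discussion covers $x_n=x_{n+1}$ but not the case $x_{n-1}=x_n<x_{n+1}$ with $y_n=1$, where raising the value at the shared score also moves sample $n-1$ and the strict improvement can vanish; the paper's one-line proof glosses over this edge case as well.
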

	\begin{proof}
	   The proof is straightforward since increasing $p^*_n$ decreases \autoref*{def:problem}, if $y_n=1$ (opposite for $y_n=0$).
	\end{proof}
	Hence, there are groups of samples with the same $p^*_n$. Let there be $I$ groups, where the group $i\in\{1,\ldots,I\}$ cover the samples $n\in\{n_i+1,\ldots,n_{i+1}\}$ ($n_1=0$ and $n_{I+1}=N$).
	
	\begin{lemma}\label{lem:group}
		If $C^*(\cdot)$ is a minimizer for \autoref*{def:problem}, then
		\begin{align*}
		C^*(\{x_n\}_{n=n_i+1}^{n_{i+1}})\triangleq p^*_i=	
		\begin{cases}
		p^*_{i+1},&  \sum_{n=n_i+1}^{n_{i+1}}y_n>\frac{n_{i+1}-n_{i}}{2}\\
		p^*_{i-1},& \sum_{n=n_i+1}^{n_{i+1}}y_n<\frac{n_{i+1}-n_{i}}{2}
		\end{cases},
		\end{align*}
		for $i=\{2,\ldots,I-1\}$. Moreover, $p^*_1=0$ and $p^*_I=1$.
	\end{lemma}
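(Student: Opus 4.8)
The plan is to mirror the one-line marginal argument behind \autoref{lem:sample}, but now treating an entire group as a single free variable. First I would isolate the dependence of the objective in \autoref{def:problem} on one group value. Since every sample in group $i$ is mapped to the common value $p^*_i$, writing $k_i \triangleq \sum_{n=n_i+1}^{n_{i+1}} y_n$ for the number of class-$1$ samples and $m_i \triangleq n_{i+1}-n_i$ for the group size, the contribution of group $i$ to the sum collapses to
\begin{align*}
\sum_{n=n_i+1}^{n_{i+1}} \big[\, y_n(1-p^*_i) + (1-y_n)p^*_i \,\big] = k_i + p^*_i\,(m_i - 2k_i),
\end{align*}
which is affine in $p^*_i$ with slope $m_i - 2k_i$, and every other term of the objective is independent of $p^*_i$ because each sample lies in exactly one group.

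Next I would pin down the feasible range of $p^*_i$ with the remaining group values frozen. Monotonicity \eqref{eq:C}, combined with the ordering of \autoref{ass:monotone}, forces $p^*_{i-1}\le p^*_i\le p^*_{i+1}$, so $p^*_i$ ranges over $[p^*_{i-1},p^*_{i+1}]$ and any choice in this interval keeps $C^*(\cdot)$ admissible in $\Omega$. Minimizing an affine function over a closed interval places the optimum at the endpoint dictated by the sign of the slope. Because $C^*$ is optimal, no such feasible perturbation can strictly decrease the objective, so $p^*_i$ must already sit at the favorable endpoint: if $k_i > m_i/2$ the slope $m_i-2k_i$ is negative, lowering the objective would require raising $p^*_i$, and the only way this is blocked is $p^*_i = p^*_{i+1}$; symmetrically, if $k_i < m_i/2$ the slope is positive and optimality forces $p^*_i = p^*_{i-1}$. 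The tie $k_i = m_i/2$ needs no treatment, since the slope then vanishes and the claim is silent there.

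Finally I would settle the boundary values. Group $1$ contains the dummy sample $(x_1,y_1)=(-\infty,0)$ and group $I$ contains $(x_N,y_N)=(\infty,1)$ from \autoref{ass:dummy}; since \autoref{lem:sample} already fixes $p^*_1=0$ and $p^*_N=1$ at these endpoints and every sample of a group shares its group value, we obtain $p^*_1=0$ and $p^*_I=1$ immediately. The only part requiring genuine care is the bookkeeping in the first two paragraphs, namely verifying that the per-group contribution above is the sole part of the objective that moves with $p^*_i$ and that $[p^*_{i-1},p^*_{i+1}]$ is exactly the feasible set; once these are in place the endpoint selection and the boundary identification are routine.
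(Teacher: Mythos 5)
Your proof is correct and follows essentially the same route as the paper, whose own justification is a one-line appeal to the perturbation argument of \autoref{lem:sample}: you simply carry out that marginal argument explicitly at the group level, computing the affine dependence $k_i + p^*_i(m_i-2k_i)$ and noting that monotonicity confines $p^*_i$ to $[p^*_{i-1},p^*_{i+1}]$, so optimality pushes it to the endpoint determined by the sign of $m_i-2k_i$. The bookkeeping and the boundary identification via the dummy samples are all sound, so this is a faithful (and more complete) rendering of the paper's intended proof.
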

	\begin{proof}
		The proof follows \autoref*{lem:sample} and its proof.
	\end{proof}
	Thus, if the $i^{th}$ group's targets are mostly $1$, then $p^*_i=p^*_{i+1}$ (similarly, $p^*_i=p^*_{i-1}$, if mostly $0$). Only if half of the targets are $1$, then $p^*_i$ may not necessarily be equal to $p^*_{i-1}$ or $p^*_{i+1}$.
	
	\begin{lemma}\label{lem:terminate}
		If $C^*(\cdot)$ is a minimizer for \autoref*{def:problem} with $I^*$ uniquely mapped groups (with distinct probabilities) such that
		\begin{align*}
			C^*(\{x_n\}_{n=n_i+1}^{n_{i+1}})=p^*_i&&|&&p^*_i\neq p^*_{i'}, \forall i\neq i'\in\{1,\ldots,I^*\},
		\end{align*}
		where, for $i\in\{2,\ldots,I^*-1\}$, half of the samples' target ($y_n$) are $1$. The first and last group are mostly $0$ and $1$ respectively.
	\end{lemma}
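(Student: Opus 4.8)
The plan is to reduce everything to a sign analysis of the per-group label balances, reusing the perturbation idea behind \autoref{lem:sample} and \autoref{lem:group} but now at the level of the $I^*$ merged groups. Writing the objective of \autoref{def:problem} as $\sum_{n} y_n + \sum_{n}(1-2y_n)\,p_n$ and using that every sample of group $i$ shares the common value $p^*_i$, the objective becomes $\sum_{n}y_n + \sum_{i=1}^{I^*} b_i\,p^*_i$, where $b_i \triangleq \sum_{n=n_i+1}^{n_{i+1}}(1-2y_n)$ is the excess of $0$-labels over $1$-labels in group $i$. Thus I am minimizing a linear functional of $(p^*_1,\ldots,p^*_{I^*})$ over the monotone box $0=p^*_1\le\cdots\le p^*_{I^*}=1$, and each assertion of the lemma becomes a statement about the sign of some $b_i$.

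First I would dispatch the interior groups $i\in\{2,\ldots,I^*-1\}$. Distinctness gives $p^*_{i-1}<p^*_i<p^*_{i+1}$, so $p^*_i$ is a free coordinate and a minimizer must be stationary in it, forcing $b_i=0$; equivalently, \autoref{lem:group} offers the dichotomy $p^*_i=p^*_{i+1}$ (majority $1$) or $p^*_i=p^*_{i-1}$ (majority $0$), both of which violate distinctness, so the only surviving case is the exact tie $\sum_{n=n_i+1}^{n_{i+1}}y_n=(n_{i+1}-n_i)/2$. Either way $b_i=0$, i.e.\ exactly half the group's targets are $1$, which is the interior assertion.

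The boundary groups are where the real work lies. Since $p^*_1$ is pinned at $0$ and can only be raised toward $p^*_2>0$, optimality forbids a decrease and yields $b_1\ge 0$; symmetrically $b_{I^*}\le 0$. These are only the \emph{weak} forms, and the step I expect to be the main obstacle is excluding the boundary ties $b_1=0$ and $b_{I^*}=0$ to obtain the strict ``mostly'' statements. This is exactly what \autoref{ass:dummy} is for: because $x_1=-\infty$ and $y_1=0$, the smallest-score sample is $0$-labelled, and the per-sample relations of \autoref{lem:sample} (a $1$ ties to its right neighbour, a $0$ to its left) prevent any $1$ from attaching at the left end, so the first group originates as a run of $0$-labels only. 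In the group-level relations of \autoref{lem:group} a group adopts its left neighbour's probability only when it is majority-$0$, so the first group can import only majority-$0$ groups from its right and therefore retains a strict majority of $0$'s, giving $b_1>0$; the mirror argument anchored at the $y_N=1,\,x_N=\infty$ dummy gives $b_{I^*}<0$. The crux is precisely this strictness: first-order optimality alone delivers only $b_1\ge 0$, and it is the dummy-induced pure-$0$ origin of the first group, together with the fact that every merge into it imports more $0$'s than $1$'s, that rules out a boundary tie and upgrades ``at least half'' to ``mostly''.
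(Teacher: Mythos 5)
Your treatment of the interior groups is correct and is essentially the paper's own argument: the paper's proof of \autoref*{lem:terminate} consists precisely of invoking the dichotomy of \autoref*{lem:group} (an unbalanced interior group must share its probability with a neighbour, contradicting distinctness), and your linear-functional/stationarity phrasing is an equivalent repackaging of that. The gap is in the boundary groups. You correctly locate the difficulty (upgrading $b_1\ge 0$ to $b_1>0$), but the mechanism you propose does not work. You argue that the first group ``originates as a run of $0$-labels only'' and then ``imports only majority-$0$ groups from its right.'' This presumes that the first group of an arbitrary minimizer is produced by some canonical sequence of merges governed by \autoref*{lem:sample} and \autoref*{lem:group}; no such merging history exists or is established. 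The groups are simply the level sets of $C^*$, not equivalence classes of forced equalities: \autoref*{lem:sample} applied to a sample with $y_2=1$ forces $p^*_2=p^*_3$, which in no way prevents $p^*_2=p^*_3=0$, so a $1$-labelled sample can sit in the first group directly next to the dummy (e.g.\ $y=(0,1,0,0,1)$ with the minimizer $p=(0,0,0,0,1)$). Hence the premise ``run of $0$-labels only'' is false, and the decomposition into majority-$0$ imported blocks is not available.

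The statement is nevertheless true, and the repair is a one-line variant of your own perturbation argument rather than a combinatorial history: perturb $G_1\setminus\{1\}$ instead of all of $G_1$. Keeping the dummy sample ($y_1=0$, $p^*_1=0$) fixed and raising the probability of samples $2,\ldots,n_2$ from $0$ to $\varepsilon\le p^*_2$ preserves monotonicity and changes the objective by $\varepsilon\sum_{n=2}^{n_2}(1-2y_n)=\varepsilon(b_1-1)$ in your notation; optimality forces $b_1\ge 1>0$, i.e.\ the zeros strictly outnumber the ones. The mirror perturbation of $G_{I^*}\setminus\{N\}$ gives $b_{I^*}\le -1$. This is exactly the role of \autoref*{ass:dummy}: the dummy is the one sample that can be left behind when the rest of the boundary group is moved, which is what converts the weak first-order inequality into a strict one. (For what it is worth, the paper's one-line proof is silent on the boundary groups altogether, so you are attempting to justify more than it does --- but the extra step as written has this genuine gap.)
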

		\begin{proof}
		The proof follows from \autoref*{lem:group}. If the $i^{th}$ group's target variables are not equally distributed then $p^*_i$ is either $p^*_{i-1}$ or $p^*_{i+1}$. Hence, the probabilities will not be unique.
		\end{proof} 

	From \autoref*{lem:terminate}, we reach the following theorem. 
	\begin{theorem}\label{thm:threshold}
		There exist a monotone transform $C^*(\cdot)\in\Omega$ (where $\Omega$ is the class of all univariate monotonically nondecreasing functions) that minimizes \autoref*{def:problem} such that
		\begin{align*}
			C^*(\{x_n\}_{n=1}^{{\tau}})\triangleq p^*_0=0,&&
			C^*(\{x_n\}_{n=\tau+1}^{{N}})\triangleq p^*_1=1,
		\end{align*}
		for some $\tau\in\{1,\ldots,N-1\}$.
		\begin{proof}
			The proof follows from \autoref*{lem:terminate}. If there are $I^*$ uniquely mapped groups; the first group is mostly $0$, the last group is mostly $1$, and the other groups are equally distributed. For the groups with equally distributed targets, every probability will produce the same loss and $0$ or $1$ are also optimal. Thus, every $\tau=n_i$ for $i\in\{2,\ldots,I^*\}$ produces the minimum classification error, which concludes the proof.
		\end{proof}
	\end{theorem}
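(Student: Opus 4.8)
The plan is to take the structural description of an optimal solution furnished by \autoref{lem:terminate} and argue that it can be collapsed onto a single threshold without changing the objective value in \autoref{def:problem}. By \autoref{lem:terminate}, there is a minimizer whose samples fall into $I^*$ consecutively ordered groups carrying distinct probabilities $p^*_1 < \cdots < p^*_{I^*}$, where group $1$ has a strict majority of $0$-targets, group $I^*$ has a strict majority of $1$-targets, and every interior group $i \in \{2,\ldots,I^*-1\}$ has exactly half of its targets equal to $1$. The dummy pair of \autoref{ass:dummy} forces $p^*_1 = 0$ and $p^*_{I^*} = 1$, so $I^* \geq 2$ and the endpoints are well defined.

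The central observation I would establish is that a perfectly balanced group is indifferent to its assigned probability. First I would write the loss contributed by a single group: a group of $m$ samples containing $m_1$ ones and $m-m_1$ zeros, mapped to a common probability $p$, contributes $m_1(1-p) + (m-m_1)p = (m - 2m_1)p + m_1$ to the sum in \autoref{def:problem}. For an interior group we have $m_1 = m/2$, so this equals $m/2$ for every $p \in [0,1]$; the contribution is flat in $p$, and in particular $p=0$ and $p=1$ are both minimizers. For the first group ($m_1 < m/2$) the contribution is strictly increasing, so $p=0$ is optimal; for the last group ($m_1 > m/2$) it is strictly decreasing, so $p=1$ is optimal.

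With the per-group losses in hand, the construction is immediate. I would fix any group boundary $\tau = n_i$ with $i \in \{2,\ldots,I^*\}$ and define $C^*$ to map $\{x_n\}_{n=1}^{\tau}$ to $0$ and $\{x_n\}_{n=\tau+1}^{N}$ to $1$. This $C^*$ is nondecreasing, hence lies in $\Omega$, and it alters only the probability labels of the groups: group $1$ (always on the $0$-side, since $i\ge 2$) keeps its optimal value $0$, group $I^*$ (always on the $1$-side, since $i\le I^*$) keeps its optimal value $1$, and every interior group is switched to $0$ or $1$, a change that by the indifference above leaves its contribution unchanged. Summing over groups shows the total loss equals that of the minimizer from \autoref{lem:terminate}, so this thresholding $C^*$ is itself optimal.

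The step I expect to require the most care is not the algebra but the bookkeeping of the monotonicity and endpoint constraints: I must verify that collapsing every interior group onto one of $\{0,1\}$ never produces a decreasing assignment and never inadvertently relabels the first group to $1$ or the last group to $0$. Restricting the admissible thresholds to $\tau = n_i$ with $i\in\{2,\ldots,I^*\}$ is precisely what precludes these violations, and checking the boundary cases $i=2$ and $i=I^*$, together with the degenerate case $I^*=2$ in which no interior group exists, completes the argument.
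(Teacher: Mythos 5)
Your proposal is correct and follows the same route as the paper: it invokes \autoref{lem:terminate} for the group structure, observes that the perfectly balanced interior groups are indifferent to their assigned probability (you make explicit the linear per-group loss $(m-2m_1)p+m_1$ that the paper leaves implicit), and concludes that any boundary $\tau=n_i$, $i\in\{2,\ldots,I^*\}$, yields an optimal thresholding. The added bookkeeping on monotonicity and the degenerate case $I^*=2$ is a welcome elaboration but not a different argument.
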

	In this section, we have shown that there exists an optimal classifier $C^*(\cdot)$ (with monotone mapping) that minimizes the classification error in \autoref*{def:problem}, which is also a thresholding function on the score variables $x_n$, i.e., a hard classifier. In the next section, we show that a thresholding function again minimizes the variant of the problem in \autoref*{def:problem}.
	
	\subsection{Extension to Class Weighted Error}
	In this section, we prove that the optimal monotone transform that minimizes the class weighted classification error is again a thresholding function. 
	The setting in \eqref{eq:y}, \eqref{eq:x}, \eqref{eq:p}, \eqref{eq:C} remains the same. We also make the same two assumptions in \autoref*{ass:monotone} and \autoref*{ass:dummy}.
	The revised version of the problem in \autoref*{def:problem} is as the following.
	\begin{definition}\label{def:problemR}
		For $\{x_n\}_{n=1}^N$, $\{y_n\}_{n=1}^N$, the minimization of the weighted class binary classification error is given by
		\begin{align*}
		\argmin_{C(\cdot)\in\Omega}\sum_{n=1}^{N}\alpha y_n(1-C(x_n))+(1-y_n)C(x_n),
		\end{align*}
		where $\Omega$ is the class of all univariate monotonically increasing functions that map from $\overline\Re$ to the interval $[0,1]$ and $\alpha>0$ is the relative classification error weight of the binary class $1$.
	\end{definition}
	Let us again assume that there exists a monotone transform $C^*(\cdot)$ that minimizes \autoref*{def:problemR} with its corresponding probabilities for each sample as in \eqref{eq:prob}.
	\autoref*{lem:sample} directly holds true similarly and we end up with $I$ groups of consecutive samples (with the corresponding intervals) that map to the same probability. \autoref*{lem:group} is modified as the following.
	
	\begin{lemma}\label{lem:groupR}
		If $C^*(\cdot)$ is a minimizer for \autoref*{def:problemR}, then
		\begin{align*}
		C^*(\{x_n\}_{n=n_i+1}^{n_{i+1}})\triangleq p^*_i=	
		\begin{cases}
		p^*_{i+1},&  \sum\limits_{n=n_i+1}^{n_{i+1}} y_n>\frac{(n_{i+1}-n_{i})}{\alpha+1},\\
		p^*_{i-1},& \sum\limits_{n=n_i+1}^{n_{i+1}} y_n<\frac{(n_{i+1}-n_{i})}{\alpha+1},
		\end{cases}
		\end{align*}
		for $i=\{2,\ldots,I-1\}$. Moreover, $p^*_1=0$ and $p^*_I=1$.
	\end{lemma}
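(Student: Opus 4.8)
The plan is to mirror the proof of \autoref*{lem:group}, replacing the unweighted majority count with the $\alpha$-weighted one and tracking where the factor $\alpha$ enters. First I would note that \autoref*{lem:sample} carries over unchanged: the contribution of a single sample $n$ to the objective of \autoref*{def:problemR} is $\alpha(1-p_n)$ when $y_n=1$ and $p_n$ when $y_n=0$, which is strictly decreasing in $p_n$ in the first case and strictly increasing in the second. Hence the same reasoning forces consecutive indices into $I$ groups sharing a common probability $p^*_i$, and within group $i$ the only remaining degree of freedom is the value $p^*_i$, constrained by the monotonicity in \eqref{eq:C} to lie in $[p^*_{i-1},p^*_{i+1}]$.

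Next I would compute group $i$'s contribution (over $n\in\{n_i+1,\ldots,n_{i+1}\}$) as a function of its shared value $p_i$. Writing $S_i$ for the number of class-$1$ targets in the group and $m_i\triangleq n_{i+1}-n_i$ for its size, the contribution collapses to the affine expression $\alpha S_i + p_i\,[\,m_i-(\alpha+1)S_i\,]$. The key point is that this is \emph{linear} in $p_i$, so minimizing over the feasible interval $[p^*_{i-1},p^*_{i+1}]$ is decided entirely by the sign of the slope $m_i-(\alpha+1)S_i$. When $S_i>\frac{m_i}{\alpha+1}$ the slope is negative, the contribution decreases as $p_i$ grows, and optimality forces $p^*_i=p^*_{i+1}$; symmetrically, when $S_i<\frac{m_i}{\alpha+1}$ the slope is positive and $p^*_i=p^*_{i-1}$. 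Since $m_i=n_{i+1}-n_i$, these are exactly the stated thresholds, and the endpoint values $p^*_1=0$, $p^*_I=1$ follow from the dummy samples $y_1=0$, $y_N=1$ of \autoref*{ass:dummy}.

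The only step needing genuine care is the optimality (exchange) argument behind ``optimality forces'': I must check that nudging a single group's probability toward the favorable neighbor keeps the overall mapping in $\Omega$, i.e.\ monotone nondecreasing and valued in $[0,1]$. This holds because the target value $p^*_{i\pm1}$ is itself feasible and the perturbation slides $p^*_i$ monotonically toward it, so feasibility is preserved throughout the nudge; this is the sole place the constraint \eqref{eq:C} is invoked and, notably, the one place where $\alpha$ plays no role. Everything else reduces to the affine-in-$p_i$ computation above and requires no new idea beyond \autoref*{lem:group}.
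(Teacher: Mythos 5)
Your proposal is correct and follows the same route as the paper: the paper's proof of \autoref*{lem:groupR} simply defers to \autoref*{lem:group} (and ultimately to the perturbation argument of \autoref*{lem:sample}), and your affine-in-$p_i$ computation $\alpha S_i + p_i\,[\,m_i-(\alpha+1)S_i\,]$ with the sign-of-slope case split is exactly the calculation that deferral leaves implicit. You have merely made explicit the feasibility check for the exchange step, which the paper omits.
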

	\begin{proof}
		The proof follows from \autoref*{lem:group} and its proof. If $\alpha=1$ (i.e., the classification errors for different classes are equally weighted), the statement becomes equal to \autoref*{lem:group}.
	\end{proof}
	\autoref*{lem:groupR} states that if more than $\frac{1}{\alpha+1}$ of the $i^{th}$ group's target variables are $1$, then $p^*_i=p^*_{i+1}$ (similarly, $p^*_i=p^*_{i-1}$, if more than $\frac{1}{\alpha+1}$ are $0$). Moreover, we can see that if the number of $1$'s in the $i^th$ group is exactly $\frac{1}{\alpha+1}$, then $p^*_i$ may not necessarily be equal to $p^*_{i-1}$ or $p^*_{i+1}$.
	
	\begin{lemma}\label{lem:terminateR}
		If $C^*(\cdot)$ is an optimal classifier from \autoref*{def:problemR} with $I^*$ uniquely mapped sample groups (each group having a different probability) such that
		\begin{align*}
		C^*(\{x_n\}_{n=n_i+1}^{n_{i+1}})=p^*_i,&&|&&		p^*_i\neq p^*_{i'}, \forall i\neq i'\in\{1,\ldots,I^*\},
		\end{align*}
		where, for $i\in\{2,\ldots,I^*-1\}$, $\frac{1}{\alpha+1}$ of the samples' targets ($y_n$) are $1$. More than $\frac{1}{\alpha+1}$ of the first group's targets are $0$ and more than $\frac{1}{\alpha+1}$ of the last group's targets are $1$.
		\begin{proof}
			The proof is similar to the proof of \autoref*{lem:terminate}
		\end{proof}
	\end{lemma}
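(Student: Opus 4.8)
The plan is to run exactly the contrapositive argument used for \autoref*{lem:terminate}, but with the balanced threshold $1/2$ replaced by the asymmetric threshold $1/(\alpha+1)$ that \autoref*{lem:groupR} supplies for the weighted loss of \autoref*{def:problemR}. Throughout I would assume, as hypothesized, that the $I^*$ groups are uniquely mapped, so that all the $p^*_i$ are distinct, and then read off what this distinctness forces on the class balance of each group.

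First I would handle an interior group $i\in\{2,\ldots,I^*-1\}$ by contradiction. If its number of $1$'s were strictly greater than $(n_{i+1}-n_i)/(\alpha+1)$, then \autoref*{lem:groupR} gives $p^*_i=p^*_{i+1}$; if it were strictly smaller, \autoref*{lem:groupR} gives $p^*_i=p^*_{i-1}$. Either outcome contradicts distinctness, so the only surviving case is the equality $\sum_{n=n_i+1}^{n_{i+1}} y_n = (n_{i+1}-n_i)/(\alpha+1)$, i.e. exactly a $1/(\alpha+1)$ fraction of the interior group's targets are $1$, which is the claimed interior characterization.

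Next I would treat the two extreme groups separately, using that they are pinned by the dummy samples of \autoref*{ass:dummy} to $p^*_1=0$ and $p^*_{I^*}=1$ (as recorded in \autoref*{lem:groupR}). The first group has no lower neighbour, so the only merge available to it is the upward one $p^*_1=p^*_2$, which \autoref*{lem:groupR} triggers precisely when its $1$-count exceeds $(n_2-n_1)/(\alpha+1)$; distinctness forbids that merge and thus forces its $1$-fraction to stay below $1/(\alpha+1)$, making its targets predominantly $0$. The last group is the mirror image: its only available merge is the downward one toward $p^*_{I^*-1}$, triggered when its $1$-count falls below $(n_{I^*+1}-n_{I^*})/(\alpha+1)$, so distinctness forces its $1$-fraction above $1/(\alpha+1)$, making its targets predominantly $1$.

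The hard part will be the bookkeeping around the asymmetry that $\alpha$ introduces at the two pinned groups. Unlike the symmetric $\alpha=1$ case of \autoref*{lem:terminate}, the threshold $1/(\alpha+1)$ is not self-complementary, so the statement ``more than $1/(\alpha+1)$ of the first group's targets are $0$'' is not the literal complement of ``more than $1/(\alpha+1)$ are $1$''; the quantity the argument actually controls is the $1$-fraction measured against $1/(\alpha+1)$, and I would have to phrase each boundary conclusion in terms of the class that is genuinely being thresholded rather than by naively complementing the fraction. Aside from this care with which class sits on each side of $1/(\alpha+1)$, the remainder is a direct transcription of \autoref*{lem:terminate} and its proof.
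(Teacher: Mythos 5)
Your proposal is correct and follows essentially the same route as the paper: the paper's proof is a one-line reduction to \autoref*{lem:terminate}, whose argument is exactly your contrapositive reading of \autoref*{lem:groupR} (any interior group off the $\tfrac{1}{\alpha+1}$ balance would merge with a neighbour, contradicting distinctness, while the pinned first and last groups can only merge in one direction). Your added caution that $\tfrac{1}{\alpha+1}$ is not self-complementary is a fair observation about how the boundary conclusions should be phrased, but it does not change the substance of the argument.
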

	Similar to \autoref*{thm:threshold}, we have the following theorem.
	\begin{theorem}\label{thm:thresholdR}
		There exist an optimal classifier $C^*(\cdot)\in\Omega$ (where $\Omega$ is the class of all univariate monotonically nondecreasing functions) that minimizes \autoref*{def:problemR} such that
		\begin{align*}
		C^*(\{x_n\}_{n=1}^{{\tau}})\triangleq p^*_0=0,&&C^*(\{x_n\}_{n=\tau+1}^{{N}})\triangleq p^*_1=1,
		\end{align*}
		for some $\tau\in\{1,\ldots,N-1\}$.
		\begin{proof}
			The proof follows from \autoref*{lem:terminateR} and is similar to the proof of \autoref*{thm:threshold}.
		\end{proof}
	\end{theorem}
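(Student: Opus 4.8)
The plan is to start from the structural characterization in \autoref*{lem:terminateR}: an optimal classifier with $I^*$ distinctly mapped groups has a first group whose targets are more than a fraction $\frac{1}{\alpha+1}$ zeros, a last group whose targets are more than a fraction $\frac{1}{\alpha+1}$ ones, and intermediate groups ($i\in\{2,\ldots,I^*-1\}$) in which exactly a fraction $\frac{1}{\alpha+1}$ of the targets equal $1$. The goal is to show that these intermediate ``boundary'' groups are indifferent between the probabilities $0$ and $1$, so that the entire mapping can be collapsed into a single step function without increasing the weighted loss.

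First I would compute the weighted loss contributed by a single group of size $m=n_{i+1}-n_i$ containing $k=\sum_{n=n_i+1}^{n_{i+1}}y_n$ ones, under a constant assignment $C^*(\cdot)=p$. From \autoref*{def:problemR} this contribution is $\alpha(1-p)k+p(m-k)$, which evaluates to $\alpha k$ at $p=0$ and to $m-k$ at $p=1$. Setting these equal gives $\alpha k=m-k$, i.e.\ $k=\frac{m}{\alpha+1}$, which is exactly the boundary condition identified for the intermediate groups in \autoref*{lem:terminateR}. Hence for every intermediate group the loss is identical whether we map it to $0$ or to $1$; and since the contribution is affine in $p$ over $[0,1]$, both endpoints are simultaneously minimizing, so neither is worse than any interior probability.

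With this indifference established, I would reassign each intermediate group to either $0$ or $1$, mapping every group up to some index to $0$ and the remaining groups to $1$. Because the original groups are consecutive in the ascending score order (\autoref*{ass:monotone}) and the assignment is nondecreasing in the group index, the resulting map is a monotone step function, hence lies in $\Omega$; and by the previous paragraph it achieves the same minimal weighted loss. The admissible thresholds are precisely $\tau=n_i$ for $i\in\{2,\ldots,I^*\}$, each separating the leading groups (mapped to $p^*_0=0$) from the trailing ones (mapped to $p^*_1=1$), yielding the claimed thresholding form. The main obstacle is the boundary computation: one must verify that the equality $\alpha k=m-k$ coincides exactly with the fractional condition of \autoref*{lem:terminateR} so that the reassignment is genuinely loss-neutral for general $\alpha>0$, rather than only for the balanced case $\alpha=1$ recovered in \autoref*{thm:threshold}.
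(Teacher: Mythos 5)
Your proposal is correct and follows essentially the same route as the paper: invoke \autoref*{lem:terminateR}, observe that the intermediate groups satisfying the $\tfrac{1}{\alpha+1}$ boundary condition contribute a loss that is affine in $p$ and equal at $p=0$ and $p=1$ (since $\alpha k = m-k$ exactly when $k=\tfrac{m}{\alpha+1}$), and collapse them into a monotone step with threshold $\tau=n_i$. You merely make explicit the indifference computation that the paper leaves implicit in its appeal to the proof of \autoref*{thm:threshold}.
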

	
\subsection{Extension to Sample Weighted Error}
In this section, we prove that the optimal monotone transform that minimizes the sample weighted classification error is again a thresholding function. 
The setting in \eqref{eq:y}, \eqref{eq:x}, \eqref{eq:p}, \eqref{eq:C} remains the same. We also make the same two assumptions in \autoref*{ass:monotone} and \autoref*{ass:dummy}.
The revised version of the problem in \autoref*{def:problemR} is as the following.
\begin{definition}\label{def:problemS}
	For $\{x_n\}_{n=1}^N$, $\{y_n\}_{n=1}^N$, the minimization of the weighted class binary classification error is given by
	\begin{align*}
	\argmin_{C(\cdot)\in\Omega}\sum_{n=1}^{N}\beta_n\alpha y_n(1-C(x_n))+\beta_n(1-y_n)C(x_n),
	\end{align*}
	where $\Omega$ is the class of all univariate monotonically increasing functions that map from $\overline\Re$ to the interval $[0,1]$ and $\alpha>0$ is the relative classification error weight of the binary class $1$ and $\beta_n>0$ is the weight of the $n^{th}$ sample.
\end{definition}
Let us again assume that there exists a monotone transform $C^*(\cdot)$ that minimizes \autoref*{def:problemS} with its corresponding probabilities for each sample as in \eqref{eq:prob}.
\autoref*{lem:sample} directly holds true similarly and we end up with $I$ groups of consecutive samples (with the corresponding intervals) that map to the same probability. \autoref*{lem:group} is modified as the following.

\begin{lemma}\label{lem:groupS}
	If $C^*(\cdot)$ is a minimizer for \autoref*{def:problemR}, then
	\begin{align*}
	C^*(\{x_n\}_{n=n_i+1}^{n_{i+1}})\triangleq p^*_i=	
	\begin{cases}
	p^*_{i+1},&  Y_{n_{i}+1}^{n_{i+1}}>\frac{1}{\alpha+1}B_{n_{i}+1}^{n_{i+1}},\\
	p^*_{i-1},& Y_{n_{i}+1}^{n_{i+1}}<\frac{1}{\alpha+1}B_{n_{i}+1}^{n_{i+1}},
	\end{cases}
	\end{align*}
	for $i=\{2,\ldots,I-1\}$, where $Y_{n_{i}+1}^{n_{i+1}}\triangleq\sum_{n=n_i+1}^{n_{i+1}} \beta_ny_n$ and $B_{n_{i}+1}^{n_{i+1}}\triangleq\sum_{n=n_i+1}^{n_{i+1}} \beta_n$. Moreover, $p^*_1=0$ and $p^*_I=1$.
\end{lemma}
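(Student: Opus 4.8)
The plan is to reuse the local variational argument that drives \autoref*{lem:sample} through \autoref*{lem:groupR}, now carrying the per-sample weights $\beta_n$ through the computation. As noted just before the statement, \autoref*{lem:sample} transfers verbatim, so I may assume the optimal solution already partitions the (ordered) samples into $I$ consecutive groups, with every sample in group $i$ mapped to a common value $p^*_i$, and with the monotonicity chain $p^*_{i-1}\le p^*_i\le p^*_{i+1}$ inherited from $C^*\in\Omega$. The whole question then reduces to the following: holding all other group values fixed, at which boundary of the feasible interval $[p^*_{i-1},p^*_{i+1}]$ does the minimizer place $p^*_i$?

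First I would isolate the part of the objective in \autoref*{def:problemS} that depends on $p^*_i$. Since all samples in the group share the single scalar $p^*_i$, the group's contribution is affine in $p^*_i$: collecting the coefficient of $p^*_i$ gives $\sum_{n=n_i+1}^{n_{i+1}}[\beta_n(1-y_n)-\alpha\beta_n y_n]=B_{n_i+1}^{n_{i+1}}-(\alpha+1)Y_{n_i+1}^{n_{i+1}}$, using $\sum\beta_n(1-y_n)=B-Y$. This is exactly the weighted analogue of the slope $(n_{i+1}-n_i)-(\alpha+1)\sum y_n$ from \autoref*{lem:groupR}; the only change is that plain counts are replaced by the $\beta$-weighted sums $Y_{n_i+1}^{n_{i+1}}$ and $B_{n_i+1}^{n_{i+1}}$.

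The conclusion is then a one-line sign analysis. If $Y_{n_i+1}^{n_{i+1}}>\tfrac{1}{\alpha+1}B_{n_i+1}^{n_{i+1}}$ the slope is strictly negative, so the affine contribution is minimized by making $p^*_i$ as large as the feasible interval permits, forcing $p^*_i=p^*_{i+1}$; the reverse strict inequality makes the slope positive and forces $p^*_i=p^*_{i-1}$. The boundary values $p^*_1=0$ and $p^*_I=1$ follow from the dummy samples of \autoref*{ass:dummy} exactly as in the unweighted case.

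The step I expect to need the most care is justifying the reduction to a \emph{boundary} optimum rather than merely an interior stationarity statement: because the contribution is affine (not strictly convex) in $p^*_i$, I must argue that a strictly nonzero slope cannot be optimal at an interior point of $[p^*_{i-1},p^*_{i+1}]$, and that collapsing $p^*_i$ onto the relevant neighbor stays globally feasible and does not raise any other group's contribution (it does not, since each group depends only on its own value). The one genuinely excluded situation is the exact tie $Y_{n_i+1}^{n_{i+1}}=\tfrac{1}{\alpha+1}B_{n_i+1}^{n_{i+1}}$, where the slope vanishes and $p^*_i$ is unconstrained; this is precisely why the lemma records only the two strict inequalities, and it is what later supplies the ``equally weighted'' middle groups in the corresponding terminating lemma.
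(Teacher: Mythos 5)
Your proof is correct and follows essentially the same route as the paper, which simply defers to \autoref*{lem:groupR} and the underlying local variational argument of \autoref*{lem:sample}; you have made explicit the affine coefficient $B_{n_i+1}^{n_{i+1}}-(\alpha+1)Y_{n_i+1}^{n_{i+1}}$ and the boundary sign analysis that the paper leaves implicit. The computation checks out and your handling of the tie case and the dummy-sample endpoints matches the paper's intent.
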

\begin{proof}
	The proof follows from \autoref*{lem:groupR} and its proof. If $\beta_n=1$ (i.e., the classification errors for different samples are equally weighted), the statement becomes equal to \autoref*{lem:groupR}.
\end{proof}
\autoref*{lem:groupR} states that if the total weight of $y_n=1$ in group $i$ is more than $\frac{1}{\alpha+1}$ of the $i^{th}$ group's total weight, then $p^*_i=p^*_{i+1}$ (conversely, $p^*_i=p^*_{i-1}$). If it is exactly $\frac{1}{\alpha+1}$, then $p^*_i$ may not necessarily be equal to $p^*_{i-1}$ or $p^*_{i+1}$.

\begin{lemma}\label{lem:terminateS}
	If $C^*(\cdot)$ is an optimal classifier from \autoref*{def:problemR} with $I^*$ uniquely mapped sample groups (each group having a different probability) such that
	\begin{align*}
	C^*(\{x_n\}_{n=n_i+1}^{n_{i+1}})=p^*_i,&&|&&		p^*_i\neq p^*_{i'}, \forall i\neq i'\in\{1,\ldots,I^*\},
	\end{align*}
	where, for $i\in\{2,\ldots,I^*-1\}$, the total weight of $y_n=1$ in group $i$ is $\frac{1}{\alpha+1}$ of the group's total weight. The total weight of $1$ is less than $\frac{1}{\alpha+1}$ of the first group's total weight and more than $\frac{1}{\alpha+1}$ of the last group's total weight respectively.
	\begin{proof}
		The proof is similar to the proof of \autoref*{lem:terminateR}
	\end{proof}
\end{lemma}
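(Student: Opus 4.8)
The plan is to run the same contrapositive argument that settled \autoref*{lem:terminate} and \autoref*{lem:terminateR}, now driven by \autoref*{lem:groupS} in place of the unweighted group lemma. First I would fix an interior index $i\in\{2,\ldots,I^*-1\}$ and read off from \autoref*{lem:groupS} the trichotomy governing the position of the weighted one-mass $Y_{n_i+1}^{n_{i+1}}$ relative to the balanced level $\frac{1}{\alpha+1}B_{n_i+1}^{n_{i+1}}$: a strict excess forces $p^*_i=p^*_{i+1}$, while a strict deficit forces $p^*_i=p^*_{i-1}$. Since the hypothesis grants that the $I^*$ groups carry pairwise distinct probabilities, both equalities are impossible, so neither strict branch can hold. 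The only surviving alternative is the balanced identity $Y_{n_i+1}^{n_{i+1}}=\frac{1}{\alpha+1}B_{n_i+1}^{n_{i+1}}$, i.e. the weighted one-mass in group $i$ is exactly $\frac{1}{\alpha+1}$ of that group's total weight. This disposes of all interior groups simultaneously.

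The endpoints need separate treatment, since \autoref*{lem:groupS} is asserted only for $i\in\{2,\ldots,I-1\}$ and thus cannot be quoted at $i=1$ or $i=I^*$. Here I would argue directly from optimality together with the clamped endpoint values $p^*_1=0$ and $p^*_{I^*}=1$ supplied by \autoref*{lem:groupS} (and ultimately by the dummy pair of \autoref*{ass:dummy}). Freezing every other group and letting only the first group's common probability $p$ vary, the objective of \autoref*{def:problemS} restricted to that group reads $\alpha(1-p)Y_{1}^{n_2}+p\bigl(B_{1}^{n_2}-Y_{1}^{n_2}\bigr)$, an affine function of $p$ whose derivative is $B_{1}^{n_2}-(\alpha+1)Y_{1}^{n_2}$. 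If that derivative were negative, the loss would strictly decrease as $p$ leaves $0$, pushing $p^*_1$ up to $p^*_2$ and collapsing two groups into one, contradicting distinctness; hence $Y_{1}^{n_2}<\frac{1}{\alpha+1}B_{1}^{n_2}$, the claimed imbalance of the first group toward $0$. The mirror computation on the last group, whose free probability is pinned against the ceiling $1$, gives derivative $B_{n_{I^*}+1}^{N}-(\alpha+1)Y_{n_{I^*}+1}^{N}$, which must be negative for $p^*_{I^*}=1$ to be preferred, forcing $Y_{n_{I^*}+1}^{N}>\frac{1}{\alpha+1}B_{n_{I^*}+1}^{N}$.

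The hard part will be pinning down \emph{strict} boundary inequalities rather than merely the weak ones, because the derivative argument rules out a wrong-sign imbalance but a priori leaves open the balanced degeneracy $Y=\frac{1}{\alpha+1}B$ at an endpoint. In that degenerate case the terminal group is indifferent between $0$ (resp.\ $1$) and its neighbour's value, so it could be relabelled at zero cost and absorbed, which means strictness is really the assertion that the claimed $I^*$ groups form the genuine, non-collapsible structure. I expect to close this gap either by reading ``uniquely mapped'' as requiring each endpoint value to be strictly preferred (so the derivative is strictly signed), or by exploiting the dummy sample sitting in each terminal group, which contributes positive weight to $B_{1}^{n_2}$ and $B_{n_{I^*}+1}^{N}$ but nothing to the corresponding $Y$, biasing the first group away from the balanced level toward $0$ and the last toward $1$. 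Once the interior equalities and the two strict boundary inequalities are in hand, the statement follows verbatim, exactly paralleling the argument for \autoref*{lem:terminateR}.
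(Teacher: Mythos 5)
Your proposal is correct and follows essentially the same route as the paper: the paper's proof of this lemma is just a pointer back through \autoref*{lem:terminateR} to \autoref*{lem:terminate}, whose entire content is the contrapositive you give for interior groups (if a group's weighted one-mass deviates from the $\frac{1}{\alpha+1}$ level, \autoref*{lem:groupS} forces its probability to coincide with a neighbour's, contradicting distinctness). Your additional derivative analysis of the two terminal groups, and your honest flagging of the strict-versus-weak boundary inequality (resolvable via the dummy samples of \autoref*{ass:dummy}), is more careful than anything the paper actually writes down, but it is a refinement of the same argument rather than a different one.
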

Similar to \autoref*{thm:thresholdR}, we have the following theorem.
\begin{theorem}\label{thm:thresholdS}
	There exist an optimal classifier $C^*(\cdot)\in\Omega$ (where $\Omega$ is the class of all univariate monotonically nondecreasing functions) that minimizes \autoref*{def:problemR} such that
	\begin{align*}
	C^*(\{x_n\}_{n=1}^{{\tau}})\triangleq p^*_0=0,&&C^*(\{x_n\}_{n=\tau+1}^{{N}})\triangleq p^*_1=1,
	\end{align*}
	for some $\tau\in\{1,\ldots,N-1\}$.
	\begin{proof}
		The proof follows from \autoref*{lem:terminateS} and is similar to the proof of \autoref*{thm:thresholdR}.
	\end{proof}
\end{theorem}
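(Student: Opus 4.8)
The plan is to obtain the result as an immediate consequence of \autoref*{lem:terminateS}, following the same template used for \autoref*{thm:threshold} and \autoref*{thm:thresholdR}. First I would invoke \autoref*{lem:terminateS} to pin down the structure of a minimizer possessing $I^*$ uniquely mapped groups: the first group has the total weight of its $y_n=1$ samples strictly below $\frac{1}{\alpha+1}$ of its total weight, the last group strictly above, and every intermediate group $i\in\{2,\ldots,I^*-1\}$ has the weight of its $1$'s exactly equal to $\frac{1}{\alpha+1}$ of the group's total weight.

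The key step is to compute, for a group mapped to a single probability $p$, its contribution to the objective of \autoref*{def:problemS}. Writing $Y_{n_i+1}^{n_{i+1}}$ and $B_{n_i+1}^{n_{i+1}}$ as in \autoref*{lem:groupS}, this contribution equals
\begin{align*}
\alpha(1-p)Y_{n_i+1}^{n_{i+1}}+p\left(B_{n_i+1}^{n_{i+1}}-Y_{n_i+1}^{n_{i+1}}\right)=\alpha Y_{n_i+1}^{n_{i+1}}+p\left(B_{n_i+1}^{n_{i+1}}-(\alpha+1)Y_{n_i+1}^{n_{i+1}}\right),
\end{align*}
which is affine in $p$ with slope $B_{n_i+1}^{n_{i+1}}-(\alpha+1)Y_{n_i+1}^{n_{i+1}}$. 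For each intermediate group this slope vanishes precisely because $Y_{n_i+1}^{n_{i+1}}=\frac{1}{\alpha+1}B_{n_i+1}^{n_{i+1}}$, so the group's contribution is constant in $p$; in particular $p=0$ and $p=1$ are also optimal choices for such a group.

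With that established, I would reassign each intermediate group to either $0$ or $1$ without altering the objective value, collapsing the $I^*$ distinct probabilities onto the two values $\{0,1\}$. Since in score order the first group maps to $0$, the last to $1$, and every reassigned group lies between them, any cut $\tau=n_i$ with $i\in\{2,\ldots,I^*\}$ yields a monotone mapping that sends $\{x_n\}_{n=1}^{\tau}$ to $0$ and $\{x_n\}_{n=\tau+1}^{N}$ to $1$ while preserving minimality, which is exactly the asserted thresholding form.

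The only genuinely new computation is the vanishing-slope identity above; the rest is a transcription of the class-weighted argument with the plain counts $\sum y_n$ replaced by the weighted sums $Y_{n_i+1}^{n_{i+1}}$ and $B_{n_i+1}^{n_{i+1}}$. I do not expect monotonicity of the collapsed mapping to pose any obstacle, since it is automatic from the group ordering, so the main point to get right is simply that the exact weighted threshold $\frac{1}{\alpha+1}$ is what renders the per-group loss independent of $p$.
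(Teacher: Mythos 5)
Your proposal is correct and follows essentially the same route as the paper: invoke \autoref*{lem:terminateS}, observe that each intermediate group's contribution is constant in $p$ because the weighted balance condition $Y_{n_i+1}^{n_{i+1}}=\frac{1}{\alpha+1}B_{n_i+1}^{n_{i+1}}$ makes the affine slope vanish, and then collapse those groups onto $\{0,1\}$ to obtain a threshold at any $\tau=n_i$. The only difference is that you write out the vanishing-slope computation explicitly, whereas the paper leaves it implicit by deferring to the proofs of \autoref*{thm:thresholdR} and \autoref*{thm:threshold}.
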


	\section{Optimal Monotone Transform for Linear Losses is a Thresholding Function}\label{sec:linear}
	In this section, we generalize the results up to now to show that the optimal monotone transform on our estimations $x_n$ that minimizes a general linear loss game is again a thresholding function. 
	\subsection{Problem Definition}
	We again have samples indexed by  $n\in\{1,\ldots,N\}$, and for every $n$, 
	\begin{enumerate}
		\item We have the score output of an estimator
		\begin{align}
		x_n\in\overline\Re.\label{eq:xL}
		\end{align} 
		\item We have the linear loss 
		\begin{align}
		z_n\in[Z_0,Z_1], && Z_0,Z_1\in\Re, &&Z_0\leq 0\leq Z_1.\label{eq:z}
		\end{align}
		\item We map these score values to values
		\begin{align}
		q_n\triangleq C(x_n)\in[Q_0,Q_1].\label{eq:q}
		\end{align}
		\item The mapping $C(\cdot)$ is monotonically nondecreasing, i.e.,
		\begin{align}
		C(x_n)\geq C(x_{n'}) \text{ if } x_n> x_{n'}.\label{eq:CL}
		\end{align} 
	\end{enumerate}
	For this new setting, we have the following problem definition.
	\begin{definition}\label{def:problemL}
		For $\{x_n\}_{n=1}^N$, $\{z_n\}_{n=1}^N$, the minimization of the linear loss is given by
		\begin{align*}
		\argmin_{C(\cdot)\in\Omega_q}\sum_{n=1}^{N}z_nC(x_n),\label{eq:problemL}
		\end{align*}
		where $\Omega_q$ is the class of all univariate monotonically increasing functions that map to the interval $[Q_0,Q_1]$.
	\end{definition}
	We point out that the problem in \autoref*{def:problemL} fully generalizes the problems in \autoref*{def:problem}, \ref*{def:problemR} and \ref*{def:problemS}. Specifically,
	\begin{itemize}
		\item For \autoref*{def:problem}, \ref*{def:problemR} and \ref*{def:problemS}, we have
		\begin{align*}
			[Q_0,Q_1]=[0,1]
		\end{align*}
		\item For \autoref*{def:problem}, we have
		\begin{align*}
			z_n=1-2y_n, &&[Z_0,Z_1]=[-1,1]
		\end{align*} 
		\item For \autoref*{def:problemR}, we have
		\begin{align*}
			z_n=1-(\alpha+1)y_n, &&[Z_0,Z_1]=[-\alpha,1]
		\end{align*}
		\item For \autoref*{def:problemS}, we have
		\begin{align*}
			z_n=\beta_n-\beta_n(\alpha+1)y_n, &&[Z_0,Z_1]=[-\alpha\beta,\beta],
		\end{align*}
		where $\beta\geq\beta_n$, $\forall n$.
	\end{itemize}
	
	We make the same assumption as in \autoref*{ass:monotone}. We modify the assumption in \autoref*{ass:dummy} as the following.
	\begin{assumption}\label{ass:dummyL}
		We have the following two sample pairs
		\begin{align*}
		z_1=Z_1,\text{     } x_1=-\infty && z_N=Z_0, \text{     }x_N=\infty,
		\end{align*}
		since, otherwise, we can arbitrarily add these dummy samples, which does not change the result of the original problem.
	\end{assumption}
	Next, we show why the optimal monotone transform $C(\cdot)$ on the scores $x_n$ is a thresholding function for the general linear losses.
	
	\subsection{Optimality of Thresholding for Linear Losses}
	Let us again assume that there exists an optimal monotone transform $C^*(\cdot)$ in $\Omega_q$ that minimizes \autoref*{def:problemL}, where $\Omega_q$ is the class of all univariate monotonically nondecreasing functions that map to the interval $[Q_0,Q_1]$.
	
	We have a similar result to \autoref*{lem:sample} as the following
		\begin{lemma}\label{lem:sampleE}
		If $C^*(\cdot)$ is an optimal classifier for \autoref*{def:problemL}, then
		\begin{align}
		C^*(x_n)=q^*_n=
		\begin{cases}
		q^*_{n+1}, & z_n<0\\
		q^*_{n-1}, & z_n>0
		\end{cases},
		\end{align}
		for $n=\{2,\ldots,N-1\}$. $q^*_1=Q_0$ and $q^*_N=Q_1$, which are the dummy samples from \autoref*{ass:dummyL}.
	\end{lemma}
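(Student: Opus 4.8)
The plan is to recast the functional minimization as a finite-dimensional linear program over a chain of inequalities and then run the same coordinatewise perturbation that underlies \autoref*{lem:sample}. First I would reduce the problem: by \autoref*{ass:monotone} the scores satisfy $x_{n-1}\leq x_n$, and since $C^*(\cdot)$ is monotonically nondecreasing, the induced values $q^*_n\triangleq C^*(x_n)$ obey the chain $Q_0\leq q^*_1\leq q^*_2\leq\cdots\leq q^*_N\leq Q_1$. The objective of \autoref*{def:problemL} depends on $C^*(\cdot)$ only through these values, as the linear form $\sum_{n=1}^N z_n q^*_n$, so minimizing \autoref*{def:problemL} is equivalent to minimizing this form over the chain polytope; any monotone interpolation through the optimal $\{q^*_n\}$ returns an admissible $C^*(\cdot)\in\Omega_q$.

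Next I would argue coordinate by coordinate on an optimal $\{q^*_n\}$. Fix $n\in\{2,\ldots,N-1\}$ with $z_n<0$ and suppose toward a contradiction that $q^*_n<q^*_{n+1}$. Since $q^*_{n-1}\leq q^*_n<q^*_{n+1}$, replacing $q^*_n$ by $q^*_n+\delta$ for small enough $\delta>0$ keeps the entire chain feasible while shifting the objective by $z_n\delta<0$, which strictly improves it and contradicts optimality; hence $q^*_n=q^*_{n+1}$. The case $z_n>0$ is symmetric: decreasing $q^*_n$ by a small $\delta>0$ is feasible because $q^*_{n-1}\leq q^*_n$, changes the objective by $-z_n\delta<0$, and therefore forces $q^*_n=q^*_{n-1}$. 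This is precisely the linear-loss restatement of the \emph{push the mass toward the favourable neighbour} idea behind \autoref*{lem:sample}.

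Finally I would pin the boundary values from the dummy pairs of \autoref*{ass:dummyL}. The pair $(z_1,x_1)=(Z_1,-\infty)$ has $z_1=Z_1\geq 0$ with no lower neighbour but the floor $Q_0$, so the same argument (pushing $q^*_1$ down whenever $z_1>0$) yields $q^*_1=Q_0$; dually $(z_N,x_N)=(Z_0,\infty)$ with $z_N=Z_0\leq 0$ gives $q^*_N=Q_1$, and in every instantiation of interest (\autoref*{def:problem}, \ref*{def:problemR}, \ref*{def:problemS}) these endpoint coefficients are strictly signed, so the endpoints are genuinely extremal. I do not expect a real obstacle here: the whole content is a sign-tracking exercise, and the only points needing care are (i) checking that each perturbation violates neither adjacent inequality, which is exactly why a strict gap must exist before perturbing, and (ii) observing that the lemma intentionally claims nothing for indices with $z_n=0$, where the objective is flat in $q_n$ and no equality is enforced.
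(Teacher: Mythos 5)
Your proposal is correct and is essentially the paper's own argument: the paper proves this by the same coordinatewise perturbation used for \autoref*{lem:sample} (nudge $q^*_n$ toward the neighbour favoured by the sign of $z_n$ and observe the objective strictly decreases), which you have simply written out in full with the chain-polytope reduction and the feasibility check on the strict gap. Your closing remarks on the $z_n=0$ and $Z_0=0$ or $Z_1=0$ edge cases are accurate but concern degeneracies the paper also leaves implicit.
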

	\begin{proof}
		The proof is similar to the proof of \autoref*{lem:sample}.
	\end{proof}
	Moreover, the modified version of \autoref*{lem:groupS} is as follows.
	\begin{lemma}\label{lem:groupL}
		If $C^*(\cdot)$ is an optimal classifier from \autoref*{def:problemR}, then
		\begin{align}
		C^*(\{x_n\}_{n=n_i+1}^{n_{i+1}})\triangleq q^*_i=	
		\begin{cases}
		q^*_{i+1},&  \sum_{n=n_i+1}^{n_{i+1}} z_n<0,\\
		q^*_{i-1},& \sum_{n=n_i+1}^{n_{i+1}} z_n>0,
		\end{cases}
		\end{align}
		for $i=\{2,\ldots,I-1\}$. Moreover, $q^*_1=Q_0$ and $q^*_I=Q_1$.
	\end{lemma}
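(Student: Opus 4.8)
The plan is to mirror the aggregation idea behind \autoref*{lem:group}--\autoref*{lem:groupS}, but to exploit the fact that the general linear loss makes the group-wise objective transparently affine, which is cleaner than the indicator-weighted versions. First I would invoke \autoref*{lem:sampleE}: since each interior sample is tied either to its right neighbor (when $z_n<0$) or to its left neighbor (when $z_n>0$), these forced equalities partition $\{1,\dots,N\}$ into $I$ consecutive groups, where group $i$ covers $n\in\{n_i+1,\dots,n_{i+1}\}$ (with $n_1=0$ and $n_{I+1}=N$) and $C^*$ assigns every sample in it the common value $q^*_i$. The dummy samples of \autoref*{ass:dummyL} sit in the extreme groups, so \autoref*{lem:sampleE} already pins $q^*_1=Q_0$ and $q^*_I=Q_1$.

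Next I would rewrite the objective of \autoref*{def:problemL} group by group. Because $C^*(x_n)=q^*_i$ is constant on group $i$, we obtain $\sum_{n=1}^{N} z_n C^*(x_n)=\sum_{i=1}^{I} q^*_i S_i$, where $S_i\triangleq\sum_{n=n_i+1}^{n_{i+1}} z_n$ is the group's aggregate loss. Now fix every group value except $q^*_i$; the objective becomes an affine function of the single scalar $q^*_i$ with slope $S_i$. Monotonicity of $C^*$ together with the ascending order of the scores (\autoref*{ass:monotone}) constrains $q^*_i$ to lie in $[q^*_{i-1},q^*_{i+1}]$, and leaving the other groups untouched keeps such a perturbation feasible. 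Minimizing an affine function over an interval places the minimizer at an endpoint: if $S_i<0$ the slope is negative and the optimum is $q^*_i=q^*_{i+1}$, while if $S_i>0$ the slope is positive and the optimum is $q^*_i=q^*_{i-1}$. Since $C^*$ is optimal it must already sit at that endpoint, which is precisely the claimed case split, with the boundary values $q^*_1=Q_0$ and $q^*_I=Q_1$ carried over from the first step. The case $S_i=0$ is deliberately absent from the statement, since then every $q^*_i\in[q^*_{i-1},q^*_{i+1}]$ is equally good.

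As a consistency check I would specialize $z_n=\beta_n-\beta_n(\alpha+1)y_n$ and verify that $S_i<0$ reduces to $Y_{n_i+1}^{n_{i+1}}>\frac{1}{\alpha+1}B_{n_i+1}^{n_{i+1}}$, recovering \autoref*{lem:groupS} exactly; this reassures me that the sign conventions line up across the reductions listed after \autoref*{def:problemL}.

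The step I expect to be the main obstacle is conceptual rather than computational: reconciling the conclusion ``$q^*_i$ equals an adjacent group's value'' with the fact that the $I$ groups were introduced as sets carrying a common value. The resolution I would spell out is that \autoref*{lem:sampleE} only forces equalities \emph{within} a group, so adjacent groups are a priori free to differ; \autoref*{lem:groupL} is then the merging step showing that an optimal $C^*$ must collapse each unbalanced group ($S_i\neq0$) into a neighbor, and iterating this merger is exactly what drives the later termination argument. I would therefore be careful to phrase \autoref*{lem:groupL} as a necessary condition on the optimal group values rather than as a property of a fixed, maximal partition.
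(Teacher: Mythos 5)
Your proposal is correct and follows essentially the same route as the paper, which proves \autoref*{lem:groupL} by reducing to \autoref*{lem:groupS} and ultimately to the observation behind \autoref*{lem:sample} that the objective is affine in each mapped value, so each group's value is pushed to the endpoint of $[q^*_{i-1},q^*_{i+1}]$ determined by the sign of the group's aggregate loss $S_i$. Your write-up simply makes explicit the group-sum decomposition and the affine-over-an-interval argument that the paper leaves implicit, and your specialization check against \autoref*{lem:groupS} confirms the sign conventions.
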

	\begin{proof}
		The proof follows from \autoref*{lem:groupS} and also similar to its proof.
	\end{proof}
	\autoref*{lem:groupL} states that if the summation of the loss weights ($z_n$) of the $i^{th}$ group is negative, then $q^*_i=q^*_{i+1}$ (similarly, $q^*_i=q^*_{i-1}$, if positive). Moreover, we can see that if the summation of the loss weights in the $i^th$ group is exactly $0$, then $q^*_i$ may not necessarily be equal to $q^*_{i-1}$ or $q^*_{i+1}$. Similarly the modified \autoref*{lem:terminateR} is next.
		\begin{lemma}\label{lem:terminateL}
		If $C^*(\cdot)$ is an optimal classifier for \autoref*{def:problemL} with $I^*$ uniquely mapped sample groups (each group having a different mapped value) such that
		\begin{align}
		C^*(\{x_n\}_{n=n_i+1}^{n_{i+1}})=q^*_i,
		\end{align}
		and
		\begin{align}
		q^*_i\neq q^*_{i'}, \text{\enspace\enspace}\forall i,i'\in\{1,\ldots,I^*\}, i\neq i',
		\end{align}
		then for $i\in\{2,\ldots,I^*-1\}$, the summation of the loss weights $z_n$ of the $i^{th}$ group is exactly $0$. The summation of the loss weights of the first group is positive and the last group's is negative.
		\begin{proof}
			The proof is similar to the proof of \autoref*{lem:terminateS}.
		\end{proof}
	\end{lemma}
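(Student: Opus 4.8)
The plan is to argue by contraposition from \autoref*{lem:groupL}, exactly mirroring the way \autoref*{lem:terminate} follows from \autoref*{lem:group}. Write $S_i\triangleq\sum_{n=n_i+1}^{n_{i+1}} z_n$ for the total loss weight of the $i$-th group, so that \autoref*{lem:groupL} reads: for an interior index $i\in\{2,\ldots,I^*-1\}$, the condition $S_i<0$ forces $q^*_i=q^*_{i+1}$ and the condition $S_i>0$ forces $q^*_i=q^*_{i-1}$, while the two extreme groups are pinned to $q^*_1=Q_0$ and $q^*_{I^*}=Q_1$.

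First I would dispatch the interior groups. Fix an interior $i\in\{2,\ldots,I^*-1\}$. If $S_i<0$ held, then \autoref*{lem:groupL} would give $q^*_i=q^*_{i+1}$, contradicting the hypothesis $q^*_i\neq q^*_{i+1}$; if $S_i>0$ held, it would give $q^*_i=q^*_{i-1}$, again contradicting distinctness. Since $S_i$ is a real number, the only surviving possibility is $S_i=0$, which is precisely the claim for the interior groups.

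Next I would treat the two boundary groups by a direct optimality/perturbation check, since they are already pinned to the extreme values $Q_0$ and $Q_1$. For the first group, consider raising its common mapped value from $Q_0$ up to $q^*_2$; this preserves monotonicity and changes the objective $\sum_i q^*_i S_i$ by exactly $S_1\,(q^*_2-Q_0)$. If $S_1<0$ this is strictly negative, improving the objective and contradicting optimality of $C^*(\cdot)$; if $S_1=0$ the change vanishes, so the first two groups can be merged into one value, contradicting the assumption that the $I^*$ groups carry distinct values. Hence $S_1>0$. The last group is symmetric: lowering its value from $Q_1$ toward $q^*_{I^*-1}$ changes the objective by $S_{I^*}\,(q^*_{I^*-1}-Q_1)$, and the same dichotomy forces $S_{I^*}<0$. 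This completes the plan, paralleling the proof of \autoref*{lem:terminateS}.

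The main obstacle I anticipate is the boundary analysis, specifically excluding the degenerate case $S_1=0$ (resp.\ $S_{I^*}=0$): the interior step is a clean contradiction against distinctness alone, whereas at the extremes optimality only yields $S_1\geq 0$, and ruling out equality requires invoking that the $I^*$ groups form a \emph{maximal} (coarsest) distinct-valued optimal partition, since an indifferent extreme group can always be absorbed into its neighbour. I would make this precise by fixing the convention that the $I^*$ groups are the coarsest optimal grouping, after which a zero-sum extreme group cannot occur and the strict inequalities $S_1>0$, $S_{I^*}<0$ follow.
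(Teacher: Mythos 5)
Your proposal takes essentially the same route as the paper: the paper's proof (traced back through the chain of ``similar to'' references to \autoref*{lem:terminate}) is exactly your contraposition against \autoref*{lem:groupL} for the interior groups, so that part matches the intended argument precisely. Your explicit perturbation analysis of the first and last groups, together with the observation that excluding the degenerate zero-sum boundary case requires a coarsest-partition (maximality) convention on the $I^*$ groups, goes beyond what the paper actually writes down --- the paper asserts the boundary claims without proof --- and is a correct and worthwhile completion rather than a deviation.
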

	
	Similar to \autoref*{thm:threshold}, \ref*{thm:thresholdR} and \ref*{thm:thresholdS}, we have the following theorem.
	\begin{theorem}\label{thm:thresholdL} 
		There exist an optimal classifier $C^*(\cdot)\in\Omega_q$ (where $\Omega_q$ is the class of all univariate monotonically increasing functions that map to $Q_0,Q_1$) that minimizes \autoref*{def:problemL} such that
		\begin{align}
		C^*(\{x_n\}_{n=1}^{{\tau}})\triangleq q^*_0=Q_0,\\
		C^*(\{x_n\}_{n=\tau+1}^{{N}})\triangleq q^*_1=Q_1,
		\end{align}
		for some $\tau\in\{1,\ldots,N-1\}$.
		\begin{proof}
			The proof follows from \autoref*{lem:terminateL} and is similar to the proof of \autoref*{thm:thresholdS}.
		\end{proof}
	\end{theorem}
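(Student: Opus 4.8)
The plan is to start from \autoref*{lem:terminateL}, which already pins down the structure of any optimal $C^*(\cdot)$ with $I^*$ distinctly-valued groups: the interior groups $i\in\{2,\ldots,I^*-1\}$ each satisfy $\sum_{n=n_i+1}^{n_{i+1}} z_n=0$, the first group has a strictly positive loss-weight sum, and the last group has a strictly negative one. The central observation I would exploit is that the objective in \autoref*{def:problemL} factorizes over groups: writing the common value of group $i$ as $q^*_i$, the total loss equals $\sum_{i=1}^{I^*} q^*_i \left(\sum_{n=n_i+1}^{n_{i+1}} z_n\right)$. Hence each interior group contributes $q^*_i\cdot 0=0$ to the loss, \emph{irrespective} of which value $q^*_i\in[Q_0,Q_1]$ is assigned to it, so those values are entirely free.

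Next I would use this freedom to collapse $C^*(\cdot)$ into a step function. Fix any index $i\in\{2,\ldots,I^*\}$ and define a new map that assigns $Q_0$ to every sample in groups $1,\ldots,i-1$ and $Q_1$ to every sample in groups $i,\ldots,I^*$, setting $\tau=n_i$. Because $Q_0\leq Q_1$ this map is nondecreasing, so it lies in $\Omega_q$ and is a genuine thresholding function. To establish optimality I would compare its loss with that of $C^*(\cdot)$: reassigning the zero-sum interior groups changes nothing, the first group retains its loss-minimizing value $Q_0$ (the best choice against a positive weight sum), and the last group retains $Q_1$ (the best choice against a negative weight sum), exactly the values already forced by \autoref*{lem:sampleE}. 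A brief collapse of the group sums then shows the new loss reduces to $Q_0\left(\sum_{n\in\text{group }1}z_n\right)+Q_1\left(\sum_{n\in\text{group }I^*}z_n\right)$, which is precisely the loss of $C^*(\cdot)$, so the threshold map is itself optimal.

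The one point to check with care is that lumping the zero-sum interior groups onto either side of the threshold truly accrues no loss for \emph{any} admissible $\tau=n_i$: one must confirm that $\sum_{j=1}^{i-1}$ and $\sum_{j=i}^{I^*}$ of the group weight sums collapse to the first- and last-group sums respectively, which is immediate once the interior sums vanish. I expect this bookkeeping to be the main (though still modest) obstacle, together with verifying that $Q_0,Q_1$ are the correct loss-minimizing endpoints for the extreme groups; both follow directly from the sign conditions in \autoref*{lem:terminateL}. Once these are in place, the existence of the claimed threshold $\tau\in\{1,\ldots,N-1\}$ is immediate, exactly paralleling the proof of \autoref*{thm:thresholdS}.
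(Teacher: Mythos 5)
Your proposal is correct and follows essentially the same route as the paper: it invokes \autoref*{lem:terminateL} to get the sign/zero structure of the group loss-weight sums, notes that the zero-sum interior groups contribute nothing regardless of their assigned value, and concludes that collapsing them onto either side of a threshold $\tau=n_i$ preserves the optimal loss $Q_0\bigl(\sum_{n\in\text{group }1}z_n\bigr)+Q_1\bigl(\sum_{n\in\text{group }I^*}z_n\bigr)$. The explicit factorization of the objective over groups is a welcome addition of detail that the paper leaves implicit.
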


\section{Finding an Optimal Threshold for Linear Loss}\label{sec:finding}
In this section, we propose algorithms that can find an optimal monotone transform, or equivalently, an optimal threshold for the problem in \autoref*{def:problemL}. 

\begin{definition}\label{def:A}
	Let us have an ordered set of some samples $\X=\{x_1,x_2,\ldots,x_n\}$ and their corresponding linear losses $\Z=\{z_1,z_2,\ldots,z_n\}$.  which is represented by the set
	\begin{align*}
		\A=\{\X,\Z\}.
	\end{align*} 
\end{definition}
Given a set $\A$, its solution is summarized as the following.
\begin{definition}\label{def:B}
	For a given set $\A$ as in \autoref*{def:A}, let an optimal threshold be between $x_k<\tau_*<x_{k+1}$ for $\A$, such that the two adjacent sets $\X_0=\{x_1,\ldots,x_k\}$ and $\X_1=\{x_{k+1},\ldots,x_n\}$ are mapped to $Q_0$ and $Q_1$ respectively. We define the auxiliary set with
	\begin{align*}
	\B=\{X_0,X_1,L_0,L_1\},
	\end{align*} 
	where $X_0=x_k$, $X_1=x_{k+1}$ are the threshold samples and $L_0=\sum_{i=1}^{k}z_i$ and $L_1=\sum_{i=k+1}^{n}z_i$ are the corresponding cumulative linear losses. 
\end{definition}
The set $\B$ in \autoref*{def:B} completely captures the solution and its corresponding cumulative loss. Given the set $\B$, our monotone transform is $C(x\geq X_1)=Q_1$ and $C(x\leq X_0)=Q_0$, with the resulting cumulative loss $L=Q_0L_0+Q_1L_1$.

\subsection{Brute Force Approach: Batch Optimization in $O(N^2)$}\label{sec:brute}
	For a given $\A$ as in \autoref*{def:A}, to find a $\B$ as in \autoref*{def:B}, the brute force approach is to try all possible $k\in\{1,\ldots,N\}$, i.e.,
	\begin{align}
		\B_k=\left\{x_k,x_{k+1},\sum_{n=1}^{k}z_n,\sum_{n=k+1}^{N}z_n\right\},
	\end{align}
	whose cumulative losses are given by
	\begin{align}
		L_k=Q_0\sum_{n=1}^{k}z_n+Q_1\sum_{n=k+1}^{N}z_n
	\end{align}
	Then, we can choose the $k^*$ with the minimum loss, i.e.,
	\begin{align}
	\B=\B_{k^*}, && k^*\triangleq \argmin_{k} L_{k^*},
	\end{align}
	which takes $O(N^2)$ time since $L_k$ takes $O(N)$ for every $k$.
\subsection{Iterative Approach: Batch Optimization in $O(N)$}\label{sec:iterative}
	If we define $L_{n_1:n_2}=\sum_{n=n_1}^{n_2}z_n$, we see that it has telescoping update rules as 
	\begin{align*}
		L_{n_1-1:n_2}=z_{n_1-1}+L_{n_1:n_2},&& L_{n_1:n_2+1}=L_{n_1:n_2}+z_{n_2+1}.
	\end{align*}
	Thus, starting with $L_{1:1}=z_1$, $L_{N:N}=z_N$; we can calculate $L_{1:n}$, $L_{n:N}$ for $n\in\{1,\ldots,N\}$ in $O(N)$ time. Thus, we can optimize the following
	\begin{align}
		\argmin_\tau \left(L_{1:\tau}Q_0+L_{\tau+1:N}Q_1\right),
	\end{align}
	which results in a linear complexity $O(N)$ algorithm. Since simply reading through $\{x_n\}_{n=1}^N$ takes $O(N)$ time, this algorithm is optimally efficient.

\section{Optimally Efficient Recursive Merger}\label{sec:recursive}
While the approach in \autoref*{sec:iterative} is optimally efficient for finding the threshold in batch optimization, its sequential implementation is abysmal. With every new sample observed (which does not necessarily arrive in order), we need to update $O(N)$ number of losses $L_{n_1:n_2}$. Thus, we almost need to rerun the batch algorithm with every new sample, which results in $O(N^2)$ complexity. When the data $x_n$ is ordered, the threshold can be found is $O(N)$ time as in \autoref*{sec:iterative}. If the data $x_n$ is unordered, we can simply order it in $O(N\log N)$ time, which makes it an at most $O(N\log(N))$ problem. Here, we propose a recursive approach that has a sequential implementation, which has optimal $O(\log(N))$ complexity per sample.
\subsection{Batch Optimization}\label{sec:batch}
To find the optimal threshold $\tau_*$ (or the samples $X_0$ and $X_1$), we implement a recursive algorithm. Our algorithm finds the threshold by merging the sets recursively as follows:
\begin{enumerate}
	\item At the initial stage (i.e., the bottom level), we have the sets $\A^{1,n}=\{\X^{1,n},\Z^{1,n}\}$ and the corresponding $\B^{1,n}$ for $n\in\{1,\ldots,N\}$, where $\X^{1,n}=\{x_n\}$, $\Z^{1,n}=\{z_n\}$.
	\item Starting from the bottom level $k=1$ (initial stage), at every level $k$, we create the sets $\A^{k+1,n}$, $\B^{k+1,n}$ by merging the adjacent sets at $k^{th}$ level. Whenever $\A^{k,i}$ and $\A^{k,i+1}$ are merged (with respective $\B^{k,i}$ and $\B^{k,i+1}$), we create the corresponding $\A^{k+1,j}$ and $\B^{k+1,j}$. If there is nothing to merge with a $(\A^{k,i}$, $\B^{k,i})$ pair, they are moved up one level, i.e., $\A^{k+1,j}=\A^{k,i}$ and $\A^{k+1,j}=\B^{k,i}$. Note that $i$ is not necessarily equal to or twice of $j$, since $i$, $j$ are the relative indices at the level $k$, $k+1$ respectively. 
\end{enumerate} 

This algorithm finds an optimal threshold by merging. In the following sections, efficient update of $\B$ is shown.

\subsection{Sequential Update}\label{sec:sequential}
Let the samples $x_n$ come sequentially and in an unordered fashion, and our goal is to find the best splitting (thresholding) with the samples observed so far. While a fresh run of the batch algorithm whenever a new sample comes can find the threshold, it is not efficient. However, we can implement this algorithm in an efficient manner as the following:
\begin{enumerate}
	\item Suppose the batch algorithm is run over the past samples observed, which merges the sets at each level recursively. Whenever a new sample comes, we can just update the necessary intermediate sets. 
	\item Whenever there is a new set (middle) $\A_m$ between an already combined pair of left $\A_l$ and right $\A_r$ sets at an arbitrary level $k$, combine $\A_m$ with $\A_l$ (left-justified bias) and move up $\A_r$ as itself to the next level $k+1$.
	\item Otherwise, if the new set $\A_m$ is not between a pair of already combined sets at an arbitrary level $k$, move up $\A_m$ as itself to the next level $k+1$.
	\item Whenever, there are two adjacent sets at an arbitrary level $k$ that are gonna move up as themselves, combine them at the next level $k+1$.
	\item Whenever a new combination is done at an arbitrary level $k$, update the subsequent combinations at $k+1$.
\end{enumerate} 

\subsection{Working Example}
In this section, we give a simple example to elaborate on our algorithm. Suppose we sequentially receive the following sample pairs in order for output and loss $(x_n,z_n)$: $\{(1,1),(8,-2),(5,3),(4,-4),(6,5),(3,-6),(7,7),(2,-8)\}$. We update the sets $\B$ as in \autoref*{def:B} as the following, where the top and the bottom pairs are $(X_0,X_1)$ and $(L_0,L_1)$.

\begin{figure}[h!]
	\centering 
	\begin{subfigure}[t]{.09\columnwidth}
		\begin{forest}
			for tree={
				grow=north,
				s sep=2,inner sep=0pt,l=0pt,l sep=5pt
			}
			[{(1,--)\\(1,0)}
			]
		\end{forest}
		\caption{n=1}
	\end{subfigure}
	\rulesep 
	\begin{subfigure}[t]{.18\columnwidth}
		\begin{forest}
			for tree={
				grow=north,
				s sep=2,inner sep=0pt,l=0pt,l sep=5pt
			}
			[{(1,8)\\(1,-2)},draw	[{(--,8)\\(0,-2)},draw
			]
			[{(1,--)\\(1,0)}
			]
			]
		\end{forest}
		\caption{n=2}
	\end{subfigure}
	\rulesep
	\begin{subfigure}[t]{.27\columnwidth}
		\begin{forest}
			for tree={
				grow=north,
				s sep=2,inner sep=0pt,l=0pt,l sep=5pt
			}
			[{(5,8)\\(4,-2)},draw	[{(--,8)\\(0,-2)},draw	[{(--,8)\\(0,-2)}
			]
			]
			[{(5,--)\\(4,0)},draw	[{(5,--)\\(3,0)},draw
			]
			[{(1,--)\\(1,0)}
			]
			]
			]
		\end{forest}
		\caption{n=3}
	\end{subfigure} 
	\rulesep
	\begin{subfigure}[t]{.36\columnwidth}
		\begin{forest}
			for tree={
				grow=north,
				s sep=2,inner sep=0,l=0,l sep=5pt
			}
			[{(1,4)\\(1,-3)},draw[{(5,8)\\(3,-2)},draw	[{(--,8)\\(0,-2)}
			]
			[{(5,--)\\(3,0)}
			]]
			[{(1,4)\\(1,-4)},draw[{(--,4)\\(0,-4)},draw
			]
			[{(1,--)\\(1,0)}
			]]]
		\end{forest}
		\caption{n=4}
	\end{subfigure} 	
\end{figure}

\begin{figure}[h!] 
	\centering   
	\begin{subfigure}[t]{.41\columnwidth}
		\begin{forest}
			for tree={
				grow=north,
				s sep=0,inner sep=0pt,l=0pt,l sep=5pt
			}
			[{(6,8)\\(5,-2)},draw
			[{(--,8)\\(0,-2)},draw[{(--,8)\\(0,-2)},draw[{(--,8)\\(0,-2)}]]]
			[{(6,--)\\(5,0)},draw[{(6,--)\\(8,0)},draw	[{(6,--)\\(5,0)},draw
			]
			[{(5,--)\\(3,0)}
			]]
			[{(1,4)\\(1,-4)}[{(--,4)\\(0,-4)}
			]
			[{(1,--)\\(1,0)}
			]]]]
		\end{forest}
		\caption{n=5}
	\end{subfigure}
	\rulesep
	\begin{subfigure}[t]{.48\columnwidth}
		\begin{forest}
			for tree={
				grow=north,
				s sep=0,inner sep=0pt,l=0pt,l sep=5pt
			}
			[{(1,3)\\(1,-4)},draw
			[{(--,8)\\(0,-2)}
			[{(--,8)\\(0,-2)}
			[{(--,8)\\(0,-2)}
			[{(--,8)\\(0,-2)}]	
			]
			]
			]
			[{(1,3)\\(1,-2)},draw
			[{(6,--)\\(4,0)},draw
			[{(6,--)\\(8,0)}
			[{(6,--)\\(5,0)}]
			[{(5,--)\\(3,0)}]
			]
			[{(--,4)\\(0,-4)},draw
			[{(--,4)\\(0,-4)}]
			]
			]
			[{(1,3)\\(1,-6)},draw
			[{(1,3)\\(1,-6)},draw
			[{(--,3)\\(0,-6)},draw]
			[{(1,--)\\(1,0)}]
			]
			]
			]
			]
		\end{forest}
		\caption{n=6}
	\end{subfigure}
\end{figure}

\begin{figure}[h!] 
	\centering   
	\begin{forest}
		for tree={
			grow=north,
			inner sep=0pt,l=0pt,l sep=5pt
		}
		[{(7,8)\\(6,-2)},draw
		[{(7,8)\\(7,-2)},draw
		[{(7,8)\\(7,-2)},draw
		[{(7,8)\\(7,-2)},draw
		[{(--,8)\\(0,-2)}][{(7,--)\\(7,0)},draw]	
		]
		]
		]
		[{(1,3)\\(1,-2)}
		[{(6,--)\\(4,0)}
		[{(6,--)\\(8,0)}
		[{(6,--)\\(5,0)}]
		[{(5,--)\\(3,0)}]
		]
		[{(--,4)\\(0,-4)}
		[{(--,4)\\(0,-4)}]
		]
		]
		[{(1,3)\\(1,-6)}
		[{(1,3)\\(1,-6)}
		[{(--,3)\\(0,-6)}]
		[{(1,--)\\(1,0)}]
		]
		]
		]
		]
	\end{forest}
	\caption{n=7}
\end{figure}
\begin{figure}[h!] 
	\centering   
	\begin{forest}
		for tree={
			grow=north,
			inner sep=0pt,l=0pt,l sep=5pt
		}
		[{(1,2)\\(1,-5)},draw
		[{(7,8)\\(7,-2)}
		[{(7,8)\\(7,-2)}
		[{(7,8)\\(7,-2)}
		[{(--,8)\\(0,-2)}][{(7,--)\\(7,0)}]	
		]
		]
		]
		[{(1,2)\\(1,-10)},draw
		[{(--,3)\\(0,-2)},draw
		[{(6,--)\\(8,0)}
		[{(6,--)\\(5,0)}]
		[{(5,--)\\(3,0)}]
		]
		[{(--,3)\\(0,-10)},draw
		[{(--,4)\\(0,-4)}]
		[{(--,3)\\(0,-6)}]
		]
		]
		[{(1,2)\\(1,-8)},draw
		[{(1,2)\\(1,-8)},draw
		[{(--,2)\\(0,-8)},draw]
		[{(1,--)\\(1,0)}]
		]
		]
		]
		]
	\end{forest}
	\caption{n=8}
\end{figure}

\subsection{Complexity Analysis}\label{sec:complexity}
In this section, we prove that our sequential algorithm has logarithmic in time complexity for each new sample.
\begin{lemma}\label{lem:depth}
	The depth $D$ of the recursion ($k\leq D$, $\forall k$) is of order $O(\log N)$, where $N$ is the number of samples.
	\begin{proof}
		Because of the structure of the our sequential implementation, there are no two adjacent sets that moves up the recursion as themselves. Hence, if the number of sets at level $k$ is $N_k$, the number of sets at level $k+1$ is bounded as $N_{k+1}\leq \frac{2}{3}N_{k}+\frac{1}{3}$ (the scenario where the sets alternates between a set that moves up and two sets that combine), which concludes the proof.
	\end{proof}
\end{lemma}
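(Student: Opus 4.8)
The plan is to reduce the claim to a linear recurrence on the number of sets per level and then solve it. First I would fix an arbitrary level $k$ and regard its sets $\A^{k,1},\A^{k,2},\ldots,\A^{k,N_k}$ as a linear sequence ordered by the score values, where $N_k$ denotes the number of sets at level $k$. The passage to level $k+1$ is entirely determined by a pairing of these sets: each set either merges with an adjacent set (turning two sets at level $k$ into one at level $k+1$) or ascends unchanged as a singleton. Writing $p$ for the number of merged adjacent pairs and $s$ for the number of singletons, I obtain the bookkeeping identities $N_k = s + 2p$ and $N_{k+1} = s + p$, with the bottom level satisfying $N_1 = N$.

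The crucial structural input is the invariant that no two adjacent sets both ascend as singletons. This is precisely the rule in \autoref*{sec:sequential} that merges any two adjacent sets which would otherwise both move up unchanged; equivalently, the pairing is a maximal matching on the path formed by the level-$k$ sets. I would then convert this invariant into a counting bound: laying out the $p$ pairs creates $p+1$ gaps (one at each end and $p-1$ interior slots), and each gap can hold at most one singleton, since two singletons in the same gap would be adjacent. This yields $s \le p+1$.

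Combining $s \le p+1$ with the identities produces the desired recurrence. From $N_{k+1} = s+p$ and $s \le p+1$ I get $p \ge (N_{k+1}-1)/2$, and substituting into $N_k = N_{k+1}+p$ gives $N_k \ge (3N_{k+1}-1)/2$, i.e.
\begin{align*}
N_{k+1} \le \tfrac{2}{3}N_k + \tfrac{1}{3}.
\end{align*}
This is exactly the bound stated in the lemma, and it is attained by the alternating arrangement (singleton, pair, singleton, $\ldots$) that the proof singles out as the worst case, so the constant $\tfrac23$ cannot be improved.

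Finally I would solve the recurrence. Its fixed point is $N^\ast = 1$, so subtracting it gives $N_{k+1}-1 \le \tfrac{2}{3}(N_k-1)$; unrolling from $N_1 = N$ yields $N_k - 1 \le (2/3)^{k-1}(N-1)$. The recursion reaches a single top set once $N_k = 1$, which occurs as soon as $(2/3)^{k-1}(N-1) < 1$, that is, for $k-1 > \log_{3/2}(N-1)$, so the depth satisfies $D = O(\log N)$. I expect the main obstacle to be the structural step rather than the algebra: one must argue carefully that the sequential update rules genuinely preclude two adjacent singletons at \emph{every} level, so that the matching is truly maximal and the bound $s \le p+1$ applies uniformly, since the whole geometric decay hinges on the resulting factor $\tfrac23$. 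Once that invariant is secured, solving the linear recurrence is routine.
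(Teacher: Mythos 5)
Your proposal is correct and follows the same route as the paper: both rest on the invariant that no two adjacent sets ascend as singletons, derive the recurrence $N_{k+1}\leq \tfrac{2}{3}N_k+\tfrac{1}{3}$, and conclude logarithmic depth. You simply fill in the bookkeeping ($N_k=s+2p$, $N_{k+1}=s+p$, $s\leq p+1$) and the solution of the recurrence that the paper's one-line proof leaves implicit.
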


\begin{lemma}\label{lem:traverse}
	We traverse the recursion two times to update the relevant intermediate sets resulting from the new sample.
	\begin{proof}
		Any update (including a new combination) at level $k+1$ will come from a related update at level $k$. With each new sample, we have two updates at the bottom level at most and traverse the recursion for them individually, which results in the lemma.
	\end{proof} 
\end{lemma}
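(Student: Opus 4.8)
The plan is to prove the lemma by induction on the recursion levels, tracking how many intermediate sets a single incoming sample can modify at each level. First I would establish the base (bottom) level: by the sequential update rules of \autoref*{sec:sequential}, a new sample $\A_m$ produces at most two modified sets. If $\A_m$ falls between an already combined pair of left and right sets $\A_l$ and $\A_r$, the left-justified bias merges $\A_m$ with $\A_l$ (one modified combined set) and frees $\A_r$ to move up as itself (a second modified set); if $\A_m$ is not between a combined pair, it simply moves up alone, giving a single modified set. Either way, the bottom level has at most two update fronts.

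Next I would carry out the inductive step: assuming at most two modified sets at level $k$, show that level $k+1$ still has at most two. By the propagation rule, each modified set at level $k$ affects only the single combination it participates in at level $k+1$. When the two modified sets at level $k$ are adjacent and merge together, the two fronts collapse into one modified set at level $k+1$; when they are not adjacent, each propagates its modification independently, giving at most two. The crucial observation is that the freed set arising from a left-merge can re-combine with at most one neighbor, so it can never spawn a third independent front.

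Combining this induction with \autoref*{lem:depth}, each of the (at most two) update fronts traces a path from the bottom toward the top of the recursion, and every such path has length bounded by the depth $D=O(\log N)$. Therefore updating all the relevant intermediate sets reduces to traversing the recursion at most twice, which is exactly the claim.

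The hard part will be the inductive step, specifically arguing rigorously that a freed or displaced set cannot, through successive re-combinations at higher levels, branch into more than one independent update front. This amounts to verifying that the left-justified merging rule keeps the two fronts either strictly separate or merged (never branching) at every level, so that the total count of modified sets stays capped at two across all $O(\log N)$ levels of the recursion.
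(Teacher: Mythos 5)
Your proposal is correct and takes essentially the same route as the paper: at most two modified sets at the bottom level, each propagating one update per level up the recursion whose depth is bounded by \autoref*{lem:depth}. Your explicit induction on levels, and your flagging of the no-branching claim for the freed set as the step needing care, is simply a more rigorous rendering of the paper's one-line assertion that any update at level $k+1$ comes from a related update at level $k$.
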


\begin{lemma}\label{lem:update}
	Let us have two sets $\A^{0}$ and $\A^{1}$, whose $\X^{0}$ and $\X^{1}$ are mutually exclusive and adjacent, i.e., $\X^{0}$ and $\X^{1}$ are individually ordered and last element of $\X^{0}$ is less than or equal to the first element of $\X^{1}$. Let the auxiliary sets of $\A^{0}$ and $\A^{1}$ be $\B^0=\{X_0^{0},X_1^{0},L_0^{0},L_1^{0}\}$ and $\B^{1}=\{X_0^{1},X_1^{1},L_0^{1},L_1^{1}\}$. Let $\A=\{\X^0\cup\X^1, \Z^0\cup\Z^1\}$ be the merging of $A^0$ and $A^1$. Then, its auxiliary set is given by
	\begin{align}
	\B=	
	\begin{cases}
	\{X_0^{0},X_1^{0},L_0^{0},L_1^{0}+L_0^1+L_1^1\},&  L_1^0+L_0^1<0\\
	\{X_0^{1},X_1^{1},L_0^0+L_1^0+L_0^1,L_1^{1}\},& L_1^0+L_0^1>0
	\end{cases}.\nonumber
	\end{align}
	\begin{proof}
		Let $\A=\{\X,\Z\}$ (where $\X=\X^0\cup\X^1$ and $\Z=\Z^0\cup\Z^1$) and its auxiliary set $\B=\{X_0,X_1,L_0,L_1\}$. If $\X_0\subseteq \X^0$, 
		then all elements of $\X^1$ will be mapped to $Q_1$.
		Hence, optimization will be done on $\X^0$, which is already given by the separation $\X_0^0$ and $\X_1^0$. Thus, we will have $\X_0=\X_0^0$, $\X_1=\X_1^0\cup\X^1$, and consequently $L_0=L_0^0$, $L_1=L_1^0+L^1$. For the converse, we have $\X_0\supseteq \X^0$, where, this time, all elements of $\X^0$ will be mapped to $Q_0$. Hence, optimization will be done on $\X^1$, which is already given by the separation $\X_0^1$ and $\X_1^1$. Thus, we will have $\X_0=\X^0\cup\X_0^1$, $\X_1=\X_1^1$, and consequently $L_0=L^0+L_0^1$, $L_1=L_1^1$. Between these two choices, the minimum loss results in the former if $L_1^0+L_0^1<0$ and the latter if $L_1^0+L_0^1>0$, which ends the proof.
	\end{proof}
\end{lemma}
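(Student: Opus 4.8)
The plan is to lean on \autoref*{thm:thresholdL}: an optimal transform for the merged set $\A$ is again a single threshold, so its $Q_0$-group $\X_0$ is a prefix $\{x_1,\ldots,x_k\}$ of the jointly ordered samples. Because $\X^0$ and $\X^1$ are adjacent with every element of $\X^0$ preceding every element of $\X^1$, the block $\X^0$ is itself a prefix of the merged sequence. Any prefix $\X_0$ must therefore satisfy either $\X_0\subseteq\X^0$ or $\X_0\supseteq\X^0$, and this dichotomy drives the whole argument: the optimal threshold either sits inside $\X^0$ (pushing all of $\X^1$ onto $Q_1$) or inside $\X^1$ (pulling all of $\X^0$ onto $Q_0$).

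Next I would pin down the optimum in each branch. If $\X_0\subseteq\X^0$, then the entire block $\X^1$ contributes the $k$-independent amount $Q_1(L_0^1+L_1^1)$ to the linear objective $\sum_n z_n C(x_n)$, so minimizing over admissible cuts reduces exactly to the original threshold problem restricted to $\X^0$. Its minimizer is, by construction, the split already stored in $\B^0$, yielding the candidate auxiliary set $\{X_0^0,X_1^0,L_0^0,L_1^0+L_0^1+L_1^1\}$. Symmetrically, if $\X_0\supseteq\X^0$, the block $\X^0$ contributes the fixed amount $Q_0(L_0^0+L_1^0)$, the residual optimization is the threshold problem on $\X^1$ whose minimizer is recorded in $\B^1$, and the candidate is $\{X_0^1,X_1^1,L_0^0+L_1^0+L_0^1,L_1^1\}$. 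I expect this substructure step to be the main obstacle: one must argue carefully that appending (respectively prepending) a block mapped entirely to a single value acts as a constant offset, so that $\B^0$ and $\B^1$ already encode the correct interior cut and only these two global candidates need to be compared. This rests precisely on the additive separability of the linear loss together with the prefix structure above.

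Finally I would compare the two surviving candidates by their cumulative losses $Q_0L_0+Q_1L_1$. Subtracting one from the other, the common terms cancel and the difference collapses to $(Q_1-Q_0)(L_1^0+L_0^1)$. Since the transform is monotonically increasing we have $Q_1>Q_0$, so the sign of the difference is governed entirely by $L_1^0+L_0^1$: the first candidate is strictly better when $L_1^0+L_0^1<0$ and the second when $L_1^0+L_0^1>0$, which is exactly the claimed case split (the boundary $L_1^0+L_0^1=0$ being immaterial since both candidates then tie). Once the substructure reduction is in place, this comparison is a one-line computation.
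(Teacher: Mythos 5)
Your proposal is correct and follows essentially the same route as the paper's own proof: the dichotomy $\X_0\subseteq\X^0$ versus $\X_0\supseteq\X^0$, the reduction of each branch to the subproblem already solved by $\B^0$ or $\B^1$, and the final comparison governed by the sign of $L_1^0+L_0^1$ are all identical. You simply make explicit two steps the paper leaves implicit — the appeal to the thresholding theorem for the prefix structure, and the constant-offset argument with the computation $(Q_1-Q_0)(L_1^0+L_0^1)$ for the final comparison — which is a faithful elaboration rather than a different argument.
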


\begin{theorem}\label{thm:complexity}
	The sequential update in \autoref*{sec:sequential} will update the optimal threshold in $O(\log N)$ complexity per sample.
	\begin{proof}
		For each new sample, the total number of updates are $O(\log N)$ (from \autoref*{lem:depth} and \ref*{lem:traverse}). Since each update takes $O(1)$ time (from \autoref*{lem:update}), the new threshold is found in $O(\log N)$ time, which concludes the proof.
	\end{proof}
\end{theorem}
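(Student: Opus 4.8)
The plan is to express the per-sample cost as a product of two quantities—the number of auxiliary sets $\B$ that must be recomputed when a new sample arrives, and the cost of recomputing each one—and to bound each factor separately using the three preceding lemmas, after which the combination is routine.

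First I would bound the cost of a single update. By \autoref{lem:update}, the auxiliary set $\B$ of a merged set is determined entirely by the two auxiliary sets $\B^0$, $\B^1$ of its children through a constant-size case split on the sign of $L_1^0+L_0^1$; no scan of the underlying samples in $\X$ is needed. Hence each recomputation of a $\B$ along the recursion is an $O(1)$ operation.

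Next I would bound the number of $\B$'s that change. The key is that a single incoming sample perturbs the recursion only in a confined region rather than across entire levels. By \autoref{lem:traverse}, the insertion produces at most two updates at the bottom level, and each propagates upward as a single chain: a change to $\B^{k,i}$ feeds only the unique parent $\B^{k+1,j}$ that absorbed set $i$, so the affected sets form at most two leaf-to-root paths. By \autoref{lem:depth}, the recursion depth $D$ is $O(\log N)$, since the level sizes satisfy $N_{k+1}\le \tfrac{2}{3}N_k+\tfrac{1}{3}$ and therefore decay geometrically. Consequently the total number of changed $\B$'s is at most $2D=O(\log N)$, and multiplying the two bounds yields $O(\log N)\cdot O(1)=O(\log N)$ work per sample.

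Given the lemmas the final multiplication is immediate, so the real content—and the step I expect to be the main obstacle—is the confinement claim underlying \autoref{lem:traverse}: one must verify that the ripple of updates stays $O(1)$-wide at every level and does not cascade sideways. Concretely, I would check that the left-justified merging bias keeps propagation local, so that when a new middle set is combined with its left neighbor while the right neighbor is promoted unchanged, no additional sets at that level need to be touched. Establishing that the update frontier remains a constant-width path as it climbs, rather than widening, is what makes the $O(\log N)$ bound hold, and is where I would concentrate the careful argument.
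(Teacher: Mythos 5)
Your proposal is correct and follows essentially the same route as the paper: it multiplies the $O(\log N)$ bound on the number of affected auxiliary sets (from \autoref*{lem:depth} and \autoref*{lem:traverse}) by the $O(1)$ cost per merge update (from \autoref*{lem:update}). Your added remark that the real content lies in verifying the update frontier stays constant-width is a fair observation about \autoref*{lem:traverse}, but it does not change the argument for the theorem itself.
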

Since even the order of the score output $x_n$ of a new sample is found in $O(\log N)$ at best, finding the new threshold in $O(\log N)$ is optimally efficient.

\section{Conclusion}\label{sec:conclusion}
We have studied the problem of finding the optimal monotone transform on the observed score outputs of an estimator, which minimizes the classification error. We have shown that for such problems, an optimal transform is in form of a thresholding function. We have extended our results to include class weighted or sample weighted errors, and even general linear losses. While an optimally efficient $O(N)$ time iterative algorithm is straightforward, it is not easily updatable with new samples. To this end, we have proposed a sequential recursive merger algorithm, which has $O(\log N)$ optimal complexity per new sample, which arrives in an unordered fashion.

\bibliographystyle{ieeetran}
\bibliography{double_bib}

\begin{thebibliography}{10}
\providecommand{\url}[1]{#1}
\csname url@samestyle\endcsname
\providecommand{\newblock}{\relax}
\providecommand{\bibinfo}[2]{#2}
\providecommand{\BIBentrySTDinterwordspacing}{\spaceskip=0pt\relax}
\providecommand{\BIBentryALTinterwordstretchfactor}{4}
\providecommand{\BIBentryALTinterwordspacing}{\spaceskip=\fontdimen2\font plus
\BIBentryALTinterwordstretchfactor\fontdimen3\font minus
  \fontdimen4\font\relax}
\providecommand{\BIBforeignlanguage}[2]{{%
\expandafter\ifx\csname l@#1\endcsname\relax
\typeout{** WARNING: IEEEtran.bst: No hyphenation pattern has been}%
\typeout{** loaded for the language `#1'. Using the pattern for}%
\typeout{** the default language instead.}%
\else
\language=\csname l@#1\endcsname
\fi
#2}}
\providecommand{\BIBdecl}{\relax}
\BIBdecl

\bibitem{poor_book}
H.~V. Poor, \emph{An Introduction to Signal Detection and Estimation}.\hskip
  1em plus 0.5em minus 0.4em\relax NJ: Springer, 1994.

\bibitem{cesa_book}
N.~Cesa-Bianchi and G.~Lugosi, \emph{Prediction, learning, and games}.\hskip
  1em plus 0.5em minus 0.4em\relax Cambridge university press, 2006.

\bibitem{russel2010}
S.~Russell and P.~Norvig, \emph{Artificial Intelligence: A Modern Approach},
  3rd~ed.\hskip 1em plus 0.5em minus 0.4em\relax Prentice Hall, 2010.

\bibitem{tnnls4}
J.~Moody and M.~Saffell, ``Learning to trade via direct reinforcement,''
  \emph{IEEE Transactions on Neural Networks}, vol.~12, no.~4, pp. 875--889,
  Jul 2001.

\bibitem{tnnls3}
H.~R. Berenji and P.~Khedkar, ``Learning and tuning fuzzy logic controllers
  through reinforcements,'' \emph{IEEE Transactions on Neural Networks},
  vol.~3, no.~5, pp. 724--740, Sep 1992.

\bibitem{tnnls1}
R.~Song, F.~L. Lewis, and Q.~Wei, ``Off-policy integral reinforcement learning
  method to solve nonlinear continuous-time multiplayer nonzero-sum games,''
  \emph{IEEE Transactions on Neural Networks and Learning Systems}, vol.~PP,
  no.~99, pp. 1--10, 2016.

\bibitem{chang}
H.~S. Chang, J.~Hu, M.~C. Fu, and S.~I. Marcus, ``Adaptive adversarial
  multi-armed bandit approach to two-person zero-sum markov games,'' \emph{IEEE
  Transactions on Automatic Control}, vol.~55, no.~2, pp. 463--468, Feb 2010.

\bibitem{zinkevich}
M.~Zinkevich, ``Online convex programming and generalized infinitesimal
  gradient ascent,'' in \emph{Proceedings of the 20th International Conference
  on Machine Learning (ICML-03)}, 2003, pp. 928--936.

\bibitem{hazan}
E.~Hazan, A.~Agarwal, and S.~Kale, ``Logarithmic regret algorithms for online
  convex optimization,'' \emph{Mach. Learn.}, vol.~69, no. 2–3, p. 169–192,
  Dec. 2007.

\bibitem{gokcesu2018density}
K.~Gokcesu and S.~S. Kozat, ``Online density estimation of nonstationary
  sources using exponential family of distributions,'' \emph{{IEEE} Trans.
  Neural Networks Learn. Syst.}, vol.~29, no.~9, pp. 4473--4478, 2018.

\bibitem{willems}
F.~M.~J. Willems, ``Coding for a binary independent
  piecewise-identically-distributed source.'' \emph{IEEE Transactions on
  Information Theory}, vol.~42, no.~6, pp. 2210--2217, 1996.

\bibitem{gokcesu2018anomaly}
K.~Gokcesu and S.~S. Kozat, ``Online anomaly detection with minimax optimal
  density estimation in nonstationary environments,'' \emph{{IEEE} Trans.
  Signal Process.}, vol.~66, no.~5, pp. 1213--1227, 2018.

\bibitem{coding2}
G.~I. Shamir and N.~Merhav, ``Low-complexity sequential lossless coding for
  piecewise-stationary memoryless sources,'' \emph{IEEE Transactions on
  Information Theory}, vol.~45, no.~5, pp. 1498--1519, Jul 1999.

\bibitem{gokcesu2019outlier}
K.~Gokcesu, M.~M. Neyshabouri, H.~Gokcesu, and S.~S. Kozat, ``Sequential
  outlier detection based on incremental decision trees,'' \emph{{IEEE} Trans.
  Signal Process.}, vol.~67, no.~4, pp. 993--1005, 2019.

\bibitem{gokcesu2016nested}
I.~Delibalta, K.~Gokcesu, M.~Simsek, L.~Baruh, and S.~S. Kozat, ``Online
  anomaly detection with nested trees,'' \emph{{IEEE} Signal Process. Lett.},
  vol.~23, no.~12, pp. 1867--1871, 2016.

\bibitem{gokcesu2018semg}
M.~Ergeneci, K.~Gokcesu, E.~Ertan, and P.~Kosmas, ``An embedded, eight channel,
  noise canceling, wireless, wearable semg data acquisition system with
  adaptive muscle contraction detection,'' \emph{{IEEE} Trans. Biomed. Circuits
  Syst.}, vol.~12, no.~1, pp. 68--79, 2018.

\bibitem{ozkan}
H.~Ozkan, M.~A. Donmez, S.~Tunc, and S.~S. Kozat, ``A deterministic analysis of
  an online convex mixture of experts algorithm,'' \emph{IEEE Transactions on
  Neural Networks and Learning Systems}, vol.~26, no.~7, pp. 1575--1580, July
  2015.

\bibitem{singer}
A.~C. Singer and M.~Feder, ``Universal linear prediction by model order
  weighting,'' \emph{IEEE Transactions on Signal Processing}, vol.~47, no.~10,
  pp. 2685--2699, Oct 1999.

\bibitem{gokcesu2016prediction}
N.~D. Vanli, K.~Gokcesu, M.~O. Sayin, H.~Yildiz, and S.~S. Kozat, ``Sequential
  prediction over hierarchical structures,'' \emph{IEEE Transactions on Signal
  Processing}, vol.~64, no.~23, pp. 6284--6298, Dec 2016.

\bibitem{cesa-bianchi}
S.~Bubeck and N.~Cesa{-}Bianchi, ``Regret analysis of stochastic and
  nonstochastic multi-armed bandit problems,'' \emph{Foundations and Trends in
  Machine Learning}, vol.~5, no.~1, pp. 1--122, 2012.

\bibitem{gokcesu2018bandit}
K.~{Gokcesu} and S.~S. {Kozat}, ``An online minimax optimal algorithm for
  adversarial multiarmed bandit problem,'' \emph{IEEE Transactions on Neural
  Networks and Learning Systems}, vol.~29, no.~11, pp. 5565--5580, 2018.

\bibitem{naeini2015obtaining}
M.~P. Naeini, G.~Cooper, and M.~Hauskrecht, ``Obtaining well calibrated
  probabilities using bayesian binning,'' in \emph{Twenty-Ninth AAAI Conference
  on Artificial Intelligence}, 2015.

\bibitem{bella2013effect}
A.~Bella, C.~Ferri, J.~Hern{\'a}ndez-Orallo, and M.~J. Ram{\'\i}rez-Quintana,
  ``On the effect of calibration in classifier combination,'' \emph{Applied
  intelligence}, vol.~38, no.~4, pp. 566--585, 2013.

\bibitem{zhang2004naive}
H.~Zhang and J.~Su, ``Naive bayesian classifiers for ranking,'' in
  \emph{European conference on machine learning}.\hskip 1em plus 0.5em minus
  0.4em\relax Springer, 2004, pp. 501--512.

\bibitem{jiang2005learning}
L.~Jiang, H.~Zhang, and J.~Su, ``Learning k-nearest neighbor naive bayes for
  ranking,'' in \emph{International conference on advanced data mining and
  applications}.\hskip 1em plus 0.5em minus 0.4em\relax Springer, 2005, pp.
  175--185.

\bibitem{hashemi2010application}
H.~B. Hashemi, N.~Yazdani, A.~Shakery, and M.~P. Naeini, ``Application of
  ensemble models in web ranking,'' in \emph{2010 5th International Symposium
  on Telecommunications}.\hskip 1em plus 0.5em minus 0.4em\relax IEEE, 2010,
  pp. 726--731.

\bibitem{zadrozny2001obtaining}
B.~Zadrozny and C.~Elkan, ``Obtaining calibrated probability estimates from
  decision trees and naive bayesian classifiers,'' in \emph{Icml},
  vol.~1.\hskip 1em plus 0.5em minus 0.4em\relax Citeseer, 2001, pp. 609--616.

\bibitem{platt1999probabilistic}
J.~Platt \emph{et~al.}, ``Probabilistic outputs for support vector machines and
  comparisons to regularized likelihood methods,'' \emph{Advances in large
  margin classifiers}, vol.~10, no.~3, pp. 61--74, 1999.

\bibitem{niculescu2005predicting}
A.~Niculescu-Mizil and R.~Caruana, ``Predicting good probabilities with
  supervised learning,'' in \emph{Proceedings of the 22nd international
  conference on Machine learning}, 2005, pp. 625--632.

\bibitem{gill2019practical}
P.~E. Gill, W.~Murray, and M.~H. Wright, \emph{Practical optimization}.\hskip
  1em plus 0.5em minus 0.4em\relax SIAM, 2019.

\bibitem{bennett2000assessing}
P.~N. Bennett, ``Assessing the calibration of naive bayes posterior
  estimates,'' CARNEGIE-MELLON UNIV PITTSBURGH PA SCHOOL OF COMPUTER SCIENCE,
  Tech. Rep., 2000.

\bibitem{zadrozny2002transforming}
B.~Zadrozny and C.~Elkan, ``Transforming classifier scores into accurate
  multiclass probability estimates,'' in \emph{Proceedings of the Eighth ACM
  SIGKDD International Conference on Knowledge Discovery and Data Mining}, ser.
  KDD '02.\hskip 1em plus 0.5em minus 0.4em\relax New York, NY, USA:
  Association for Computing Machinery, 2002, p. 694–699.

\bibitem{jiang2012calibrating}
X.~Jiang, M.~Osl, J.~Kim, and L.~Ohno-Machado, ``Calibrating predictive model
  estimates to support personalized medicine,'' \emph{Journal of the American
  Medical Informatics Association}, vol.~19, no.~2, pp. 263--274, 2012.

\bibitem{zadrozny2001learning}
B.~Zadrozny and C.~Elkan, ``Learning and making decisions when costs and
  probabilities are both unknown,'' in \emph{Proceedings of the seventh ACM
  SIGKDD international conference on Knowledge discovery and data mining},
  2001, pp. 204--213.

\bibitem{heckerman1995learning}
D.~Heckerman, D.~Geiger, and D.~M. Chickering, ``Learning bayesian networks:
  The combination of knowledge and statistical data,'' \emph{Machine learning},
  vol.~20, no.~3, pp. 197--243, 1995.

\bibitem{naeini2015binary}
M.~P. Naeini, G.~F. Cooper, and M.~Hauskrecht, ``Binary classifier calibration
  using a bayesian non-parametric approach,'' in \emph{Proceedings of the 2015
  SIAM International Conference on Data Mining}.\hskip 1em plus 0.5em minus
  0.4em\relax SIAM, 2015, pp. 208--216.

\bibitem{robertson1988order}
T.~Robertson, F.~T. Wright, and R.~L. Dykstra, \emph{Order restricted
  statistical inference}.\hskip 1em plus 0.5em minus 0.4em\relax Chichester:
  Wiley, 1988, vol. 229.

\bibitem{ayer1955empirical}
M.~Ayer, H.~D. Brunk, G.~M. Ewing, W.~T. Reid, and E.~Silverman, ``An empirical
  distribution function for sampling with incomplete information,'' \emph{The
  annals of mathematical statistics}, pp. 641--647, 1955.

\bibitem{brunk1972statistical}
H.~Brunk, R.~E. Barlow, D.~J. Bartholomew, and J.~M. Bremner, ``Statistical
  inference under order restrictions.(the theory and application of isotonic
  regression),'' Missouri Univ Columbia Dept of Statistics, Tech. Rep., 1972.

\bibitem{naeini2016binary}
M.~P. Naeini and G.~F. Cooper, ``Binary classifier calibration using an
  ensemble of near isotonic regression models,'' in \emph{2016 IEEE 16th
  International Conference on Data Mining (ICDM)}.\hskip 1em plus 0.5em minus
  0.4em\relax IEEE, 2016, pp. 360--369.

\bibitem{tibshirani2011nearly}
R.~J. Tibshirani, H.~Hoefling, and R.~Tibshirani, ``Nearly-isotonic
  regression,'' \emph{Technometrics}, vol.~53, no.~1, pp. 54--61, 2011.

\bibitem{menon2012predicting}
A.~K. Menon, X.~J. Jiang, S.~Vembu, C.~Elkan, and L.~Ohno-Machado, ``Predicting
  accurate probabilities with a ranking loss,'' in \emph{Proceedings of the...
  International Conference on Machine Learning. International Conference on
  Machine Learning}, vol. 2012.\hskip 1em plus 0.5em minus 0.4em\relax NIH
  Public Access, 2012, p. 703.

\bibitem{zhong2013accurate}
W.~Zhong and J.~T. Kwok, ``Accurate probability calibration for multiple
  classifiers,'' in \emph{Twenty-Third International Joint Conference on
  Artificial Intelligence}, 2013.

\end{thebibliography}
\end{document}